\documentclass{article}

 \usepackage[preprint]{neurips_2025}

\usepackage[utf8]{inputenc} % allow utf-8 input
\usepackage[T1]{fontenc}    % use 8-bit T1 fonts
\usepackage{hyperref}       % hyperlinks
\usepackage{url}            % simple URL typesetting
\usepackage{booktabs}       % professional-quality tables
\usepackage{amsfonts}       % blackboard math symbols
\usepackage{nicefrac}       % compact symbols for 1/2, e
\usepackage{graphicx}
\usepackage{microtype}      % microtypography
\usepackage{xcolor}         % colors

\usepackage{microtype}
\usepackage{graphicx}
\usepackage{subcaption}
\usepackage{booktabs} % for professional tables
\usepackage{hyperref}
\usepackage{algorithm}
\usepackage{algorithmic}
\usepackage{amsmath}
\usepackage{amssymb}
\usepackage{mathtools}
\usepackage{amsthm}
\usepackage{amsthm}
\usepackage{enumitem}
\usepackage{cancel}
\usepackage{subcaption} % in preamble
\usepackage{algorithm}

\usepackage[symbol]{footmisc}

\usepackage{enumitem}
\setlist{nosep} % Removes all vertical spacing
\theoremstyle{plain}
\newtheorem{theorem}{Theorem}
\newtheorem{proposition}[theorem]{Proposition}
\newtheorem{lemma}[theorem]{Lemma}

\theoremstyle{definition}

\theoremstyle{remark}

\renewcommand{\L}{\mathcal{L}}
\newcommand{\med}{\operatorname{med}}
\newcommand{\A}{\mathcal{A}}
\newcommand{\D}{\mathcal{D}}
\newcommand{\J}{\mathcal{J}}

\newcommand{\R}{\mathbb{R}}
\renewcommand{\d}{\mathrm{d}}

\newcommand{\xx}[2]{x^{(#1)}_{#2}}

\title{Neural Kolmogorov Equations: Parallelizable \\ Learning of Stochastic Dynamics under General Noise}
\author{%
  Arthur Bizzi \\
  EPFL, Imperial College\\ 
  \texttt{a.bizzi@imperial.ac.uk} \\
  \And
Olga Fink\\
EPFL \\
  \texttt{olga.fink@epfl.ch} \\
}

\begin{document}

\maketitle

\begin{abstract}
% Stochastic differential equations (SDEs) provide a fundamental framework for modelling dynamical systems subject to noise,
% uncertainty, with applications ranging from physics and biology to finance and climate science.
Neural stochastic differential equations (SDEs) have emerged as powerful tools for learning noisy or stochastic dynamics directly from data;
% , providing a framework for modelling systems subject to noise
however, existing approaches largely assume uncoupled and continuous noise, limiting their applicability to realistic stochastic drivers, and often scale poorly in time, requiring expensive autoregressive training. To address these limitations, we propose Neural Kolmogorov Equations (NKEs), a deterministic, infinite-dimensional reformulation of Neural SDEs based on the Kolmogorov Forward equation, transforming the learning problem from modelling individual stochastic trajectories to modelling the evolution of probability densities. NKEs learn general Lévy-type stochastic forcing directly through the operator structure of the KFE, and enable parallel-in-time training via a Lagrangian Galerkin projection and operator splitting. We evaluate NKEs on several stochastic benchmarks, including systems with coupled noise and jump processes, and verify that NKEs provide flexible models that accurately recover deterministic and stochastic dynamics with competitive predictive accuracy and improved training efficiency. Code and pretrained models will be released.
\end{abstract}

\section{Introduction}

% Stochastic differential equations (SDEs) are a canonical framework 
% for modelling continuous-time Markov systems, with applications spanning science, engineering, and finance. In this formulation, system trajectories evolve under the combined influence of a deterministic drift, representing the action of physical forces and system/market trends, and a stochastic driver, representing the effect of environmental noise or unknown/unmodelled effects. In real-world systems, these stochastic terms may take a variety of forms, going from continuous, random-walk-like noise to heavy-tailed perturbations, sudden shocks, and other forms of non-smooth stochastic forcing.

Stochastic differential equations (SDEs) are a canonical framework for modelling continuous-time Markov systems, with applications spanning science, engineering, and finance. In this formulation, system trajectories evolve under the combined influence of a deterministic drift, representing forces and systemic trends, and a stochastic driver, accounting for uncertainty and unresolved dynamics. While classical approaches often assume continuous perturbations, such as Brownian motion, real systems frequently exhibit heavy-tailed fluctuations, abrupt discontinuities, and intermittent shocks, as naturally captured by the family of Lévy processes.
% This generality is essential in practice: real-world systems are subject to noise that ranges from smooth perturbations in well-controlled physical settings to large discontinuous shocks in financial markets and biological systems.

Neural stochastic differential equations extend this framework by parameterizing deterministic and stochastic terms with neural networks. Introduced as stochastic generalizations of Neural Ordinary Differential Equations (Neural ODEs \cite{chen2018neural}),  Neural SDEs enable the learning of stochastic dynamics directly from data,
% . Instead of specifying the governing equations a priori, the drift and diffusion functions are learned from observations,
allowing flexible modelling in the absence of governing equations. 
% This approach has several advantages. 
They provide a principled way to incorporate uncertainty into continuous-time models, naturally capture irregularly sampled time series, and enable generative modelling of stochastic trajectories. These properties have led to growing applications across scientific \cite{rackauckas2020universal} and financial \cite{cohen2023arbitrage} modelling. However, existing implementations are still limited in their \textit{noise diversity} and \textit{scalability in time}.

Neural SDEs are generally limited to scalar or uncoupled Gaussian noise, due to the limitations of maximum-likelihood estimation, preventing them from modelling non-isotropic and discontinuous stochasticity. 
% Still, autoregressive frameworks exist for SDE learning with exotic noise [?]
% , such as Neural Jump SDEs \cite{jia2019neural} for discrete jumps and Levy Neural SDEs for $\alpha-$stable processes[?,?]. 
% \textbf{Fixed Noise Parametrization.} 
Moreover, existing Neural SDE models typically assume that the structure of the driving noise process is known, i.e., that the stochastic forcing follows a fixed parametric  form such as Brownian motion,
% an $\alpha-$stable process with known $\alpha$, 
or jumps with a predefined distribution. 
% In practice, however, the  true noise structure is rarely known a priori and may exhibit complex behavior and/or state-depedent noise. As a result, learning SDEs driven by an unknown noise process remains an important open problem.

% \vspace{-0.6cm}

Moreover, unlike Neural ODEs, Neural SDEs cannot be trained by directly differentiating observed trajectories, due to the presence of noise.
% makes sample paths non-differentiable and  non-unique, preventing straightforward gradient-based training. 
As a result, most approaches rely on repeatedly simulating trajectories using stochastic integration schemes. This procedure is computationally expensive,  difficult to parallelize in time, and often numerically inaccurate. 
% In the generative context, these limitations led to the development of Diffusion [?] and Flow Matching [?] models
% , which directly learn conditional maps between distributions. 
% Still, these frameworks learn mappings between two static distributions for sampling purposes, and thus cannot be immediately adapted to continuous-time sequential processes.
% Training these advanced frameworks requires balancing numerical stability with computational cost.[14] Rough Path Theory has emerged as a key tool for deriving efficient adjoint gradients for non-semimartingale and rougher noises \cite{kidger2021efficient}.[15]
% \textbf{Ill-posedness.}
Recent approaches attempt to alleviate these issues by leveraging Euler-Maruyama (EM) discretizations to enable simulation-free or partially parallelized training.
% Variations of this idea have been used to develop simulation-free frameworks for Latent SDEs (SDE Matching) and continuous-time sequential flow-matching [flow matching].
% bypass the need for expensive numerical integration during training, enabling the scaling of latent SDEs to high-dimensional datasets \cite{vargas2024sde}.[16, 17]
However, these discretization-based strategies may suffer from
 % However, as discussed in Section \ref{sec:splitting}, 
 ill-posedness, ultimately degrading predictive performance.
% \vspace{-0.8cm}

In this paper, we address these limitations by leveraging a deterministic representation  of stochastic dynamics via the  \textit{Kolmogorov Forward Equation} (KFE). While SDEs describe the \textit{microscopic} evolution of individual sample paths, the KFE provides a \textit{macroscopic} description of the system in terms of 
the evolution of a path's probability density (see Fig. \ref{fig:kolmogorov-eqs}),
% : \R \times\R^n \to \R\) 
% . In this formulation, stochastic dynamics are represented as a deterministic
given by a partial differential equation (PDE).
% governing the time evolution of probability distributions. 
The KFE therefore provides a deterministic representation of SDEs, enabling the direct application of numerical and theoretical tools from the analysis of PDEs. Most importantly, it is \textit{noise-agnostic}, which will allow us to represent substantially more diverse stochastic drivers.

\begin{figure}[h!]
    \centering
    \begin{subfigure}{0.425\textwidth}
        \centering
        \includegraphics[width=0.85\linewidth]{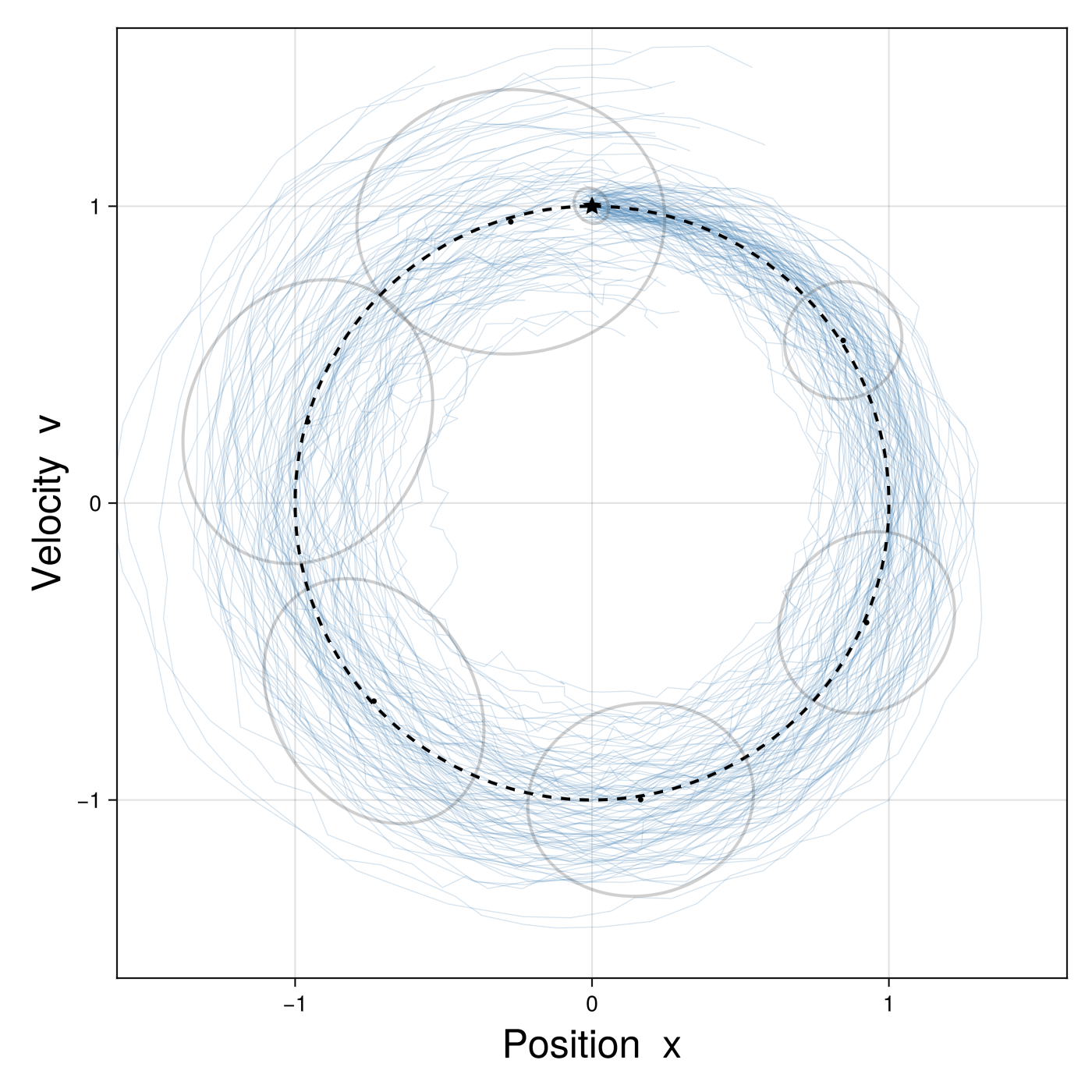}
        % \caption{}
    \end{subfigure}
    \hfill
    \begin{subfigure}{0.425\textwidth}
        \centering
        \includegraphics[width=0.98\linewidth]{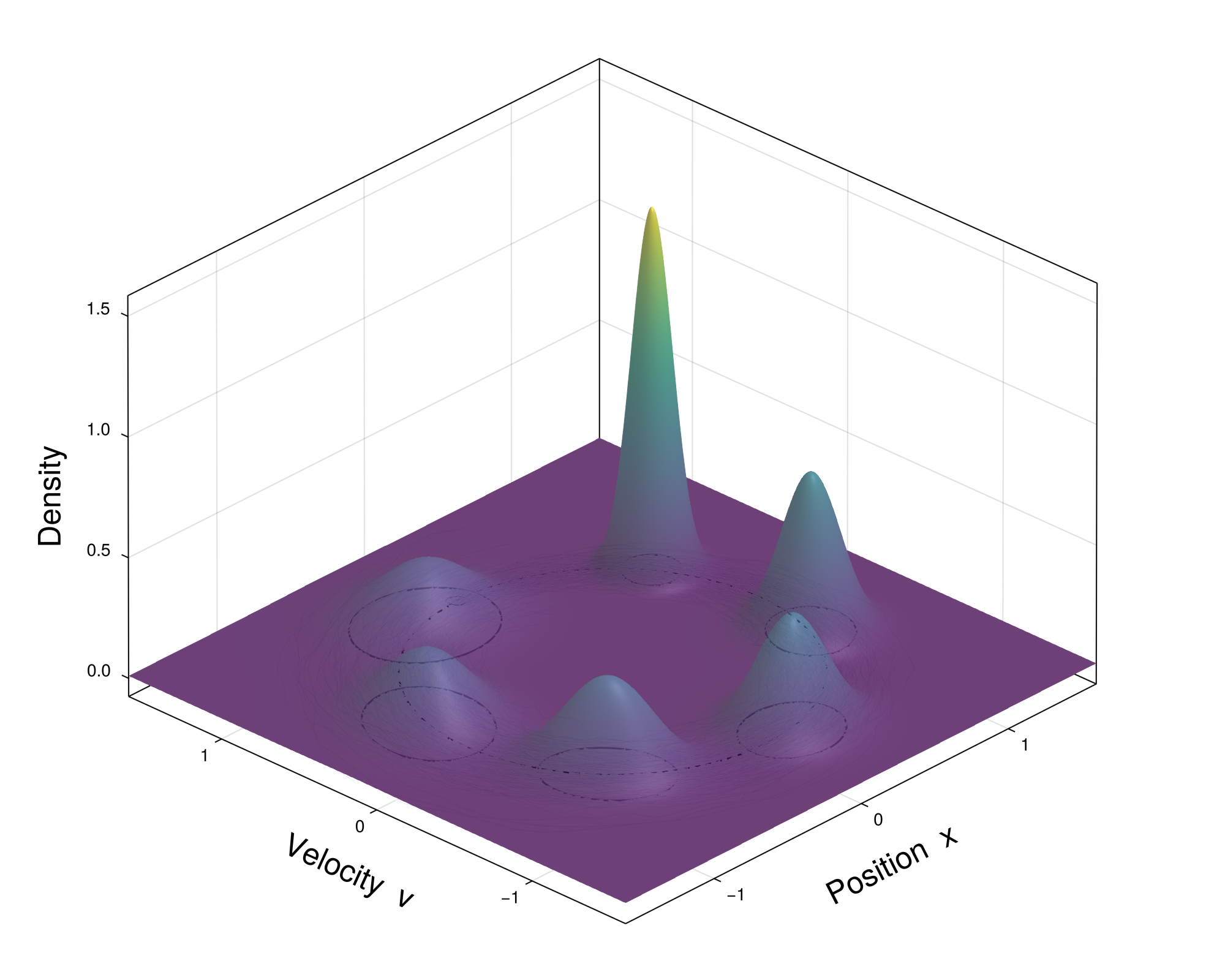}
        % \caption{}
    \end{subfigure}
    \caption{A stochastic harmonic oscillator. The Kolmogorov Equations treat individual particle realizations (left) from the macroscopic perspective of mass distributions moving (right).}
    \label{fig:kolmogorov-eqs}
\end{figure}

Inspired by this perspective, we propose Neural Kolmogorov Equations (NKEs), a framework for Neural SDE modeling based on directly  learning the advection and diffusion dynamics of the KFE. By modeling these operators with neural networks, NKEs provide a principled and accurate alternative to classical Euler–Maruyama–based training schemes, while
% enabling efficient parallelization across time. Importantly, this formulation allows the model not only to learn the drift dynamics but 
allowing the model to infer the structure of the driving noise process itself.
While the KFE has been used implicitly for SDE learning \cite{lai2023fpdiffusion}, to the best of our knowledge NKEs are the first framework to treat it as the primary learning object and parameterize both drift and stochastic forcing through its operator structure with neural networks. 
% Our contributions are summarized as follows:
% \begin{itemize}
%     \item \textbf{Scalability and accuracy.} We propose a novel learning scheme for drift and noise estimation based on a Neural PDE representation of the Kolmogorov equation. This formulation enables training procedures that avoid the expensive autoregressive simulation typically required in Neural SDE training while retaining accuracy.
%     We interpret Euler–Maruyama schemes as Lie-Trotter operator splitting,
%     % and their generalization to non-Gaussian noise, 
%     and introduce a deterministic generalization based on Strang splitting, which results in more accurate parallelizable learning.
%     \item \textbf{General-form noise learning.} We introduce a noise-agnostic learning strategy for additive stochastic processes based on  semigroup representations and generative modelling. NKEs directly model the  stochastic forcing of an SDE,  enabling a unified treatment of Gaussian/Brownian noise, heavy-tailed $\alpha-$stable noise, and Poisson jump processes with arbitrary jump distributions. Consequently, the framework can not only estimate  the intensity of pre-specified noise models but also  learn the structure of the driving noise itself when no parametric assumptions are available.
% \end{itemize}
\begin{enumerate}
\item \emph{General-form noise learning.} We introduce a noise-agnostic learning strategy for stochastic dynamics based on  semigroup representations and generative modelling. NKEs directly model the  stochastic forcing of an SDE,  enabling a unified treatment of Gaussian/Brownian noise with arbitrary coupling and jump processes with general jump distributions. Consequently, the framework can not only estimate  the intensity of pre-specified noise models but also  learn the structure of the driving noise itself when no parametric assumptions are available.

    \item \emph{Scalability in time and accuracy.} We propose a novel learning scheme for drift and noise estimation based on a Neural PDE representation of the Kolmogorov equation. This formulation enables training procedures that avoid the expensive autoregressive simulation typically required in Neural SDE training while retaining accuracy.
    We interpret Euler–Maruyama schemes as Lie-Trotter operator splitting,
    % and their generalization to non-Gaussian noise, 
    and introduce a deterministic generalization based on Strang splitting, which results in more accurate parallelizable learning.

\end{enumerate}
    
% NKEs exploit Lagrangian Galerkin methods and operator splitting 
% and other techniques from numerical methods for PDEs, 

% The framework of Kolmogorov Equations itself is not new and has equivalent probabilistic representations that have been implicitly exploited in multiple works on Neural SDEs [???]. 
% % While related ideas have appeared implicitly in prior work on Neural SDEs, they have not been treated as the primary object of learning.
% Instead, we believe to be the first to place the Kolmogorov forward operator at the center of the modeling and learning process, and to explicitly leverage its numerical structure. Our contributions are:

% We analyse parallelizable learning schemes from the perspective of operator-splitting techniques from numerical methods for PDEs. 

% while mitigating the ill-posedness of EM methods.

We evaluate the proposed framework on multiple benchmarks in the field of scientific machine learning, including systems driven by non-diagonal and discontinuous noise. We further compare NKEs with existing autoregressive and simulation-free frameworks for SDE learning.

% \textbf{Noise Learning.} We describe a novel strategy for noise learning, which enables the modelling of general driving processes including Brownian/Gaussian noise, Poisson point processes, and heavy-tailed $\alpha-$stable noise. By representing these processes as semigroups, NKEs can learn the parametrization of the noise itself, as opposed to relying on structural assumptions about it.

% We identify an issue of ill-posedness in Euler-Maruyama approaches to SDE-learning in the literature and mitigate it via a time-symmetrical learning scheme.

% \textbf{Accuracy.} We analyse parallelizable learning schemes from the perspective of operator-splitting techniques from numerical methods for PDEs. This allows for a precise description of EM schemes and their generalization to non-Gaussian noise, as well as the use of second-order Strang splitting, leading to substantially more accurate learning schemes than the EM discretizations present in the literature.
% Approaching the learning problem from the perspective of PDEs enables the usage of operator-splitting techniques, leading to an approach . Additionally, 

% They may also be combined with previous approaches to model state-dependent Gaussian noise.

% A detailed literature review may be found in Appendix \ref{sec:Related-Work}.

\section{Related Work}

\paragraph{Neural SDEs.}
Neural ODEs~\cite{chen2018neural} and Neural Controlled Differential Equations~\cite{kidger2020neural} learn continuous-time deterministic dynamics in terms of neural vector fields, which can be efficiently trained by directly regressing on estimated derivatives of time-series data. Neural SDEs extend this paradigm to stochastic dynamics by jointly modelling deterministic and stochastic forcing. Early formulations such as latent SDEs~\cite{li2020scalable} and Neural SDE GANs~\cite{kidger2021neural} relied on autoregressive simulation and stochastic adjoint methods, making training computationally expensive and difficult to parallelize in time.

\paragraph{Diffusion and Flow-Matching.}
In the generative setting, the limitations of autoregressive Neural-SDE-based Continuous Normalizing Flows~\cite{grathwohl2019ffjord} motivated the development of simulation-free approaches such as diffusion models~\cite{ho2020denoising}, score-based generative models~\cite{song2021score}, and Flow Matching~\cite{lipman2023flow}. While highly successful for generative sampling, these approaches primarily learn mappings between static distributions and are not naturally designed for continuous-time stochastic system identification from sequential observations.

\paragraph{Simulation-free learning and Euler--Maruyama methods.}
The scalability limitations of conventional Neural SDEs have motivated multiple simulation-free approaches based on Euler--Maruyama discretizations of Itô SDEs~\cite{kloeden1992numerical}. These methods directly regress on finite differences between consecutive observations, similarly to Neural ODE training. Examples include Trajectory Flow Matching~\cite{zhang2024trajectory}, which adapts flow-matching ideas to stochastic sequential processes, SDE Matching~\cite{bartosh2025sdematching}, which proposes simulation-free latent SDE training via matching objectives, and Unified Neural SDE frameworks~\cite{liu2019neuralsde}, which leverage discretized stochastic dynamics for scalable learning. These methods typically rely on separable drift and diffusion estimation under Gaussian assumptions.

\paragraph{Jump-SDEs.}
Many systems in science, engineering and finance exhibit stochastic dynamics beyond continuous Gaussian perturbations. Neural Jump SDEs~\cite{jia2019neuraljump} and Neural Stochastic Temporal Jump Processes~\cite{zhang2024neuraljumpdiffusion} study discontinuous dynamics in the context of temporal point processes and event modelling. Neural Jump ODEs~\cite{rubanova2019latent} instead model discontinuities arising from irregular observations and filtering problems. More recently, Neural SDEs with jumps~\cite{rong2026neuralsdejumps} have incorporated Poisson jump processes into neural stochastic dynamics. However, existing approaches generally assume fixed jump families or focus on event-driven dynamics, rather than learning general jump-diffusion operators directly from density evolution.

\paragraph{Neural Fokker--Planck models.}
Neural approximations to Fokker--Planck equations have been proposed using normalizing flows~\cite{rezende2015variational} and Physics-Informed Neural Networks~\cite{raissi2019physics}. To the best of our knowledge, these approaches do not learn stochastic operators directly from trajectory data and are generally limited to continuous diffusion dynamics.

\paragraph{Generator learning.}
The classical machinery of kernel methods and composition operators has also been used for stochastic generator learning. In particular, Extended Dynamic Mode Decomposition~\cite{williams2015data} and related Koopman-based methods approximate backward generators and transfer operators from data. These approaches may be viewed as adjoint to ours, but typically learn combined operators implicitly and do not directly parameterize advection, diffusion and jump terms with neural representations.

\paragraph{Projection filtering and particle methods.}
Projecting stochastic dynamics onto Gaussian families is a classical idea in nonlinear filtering and stochastic simulation. Projection filtering methods~\cite{brigo1998differential,brigo1999approximate,hanzon1991projection} provide the theoretical background for finite-dimensional approximations of infinite-dimensional stochastic dynamics. Related moment-closure methods~\cite{dipaola2002approximate} and Smoothed Particle Hydrodynamics (SPH)~\cite{gingold1977sph,lucy1977numerical} similarly approximate evolving densities through moving localized kernels. Our work builds on these ideas to construct neural operator models on projected Gaussian-mixture representations.

\newpage 

\begin{figure*}[h!]\label{fig:kolmogorov-gaussians}
    \centering
    \includegraphics[width=0.8\linewidth]{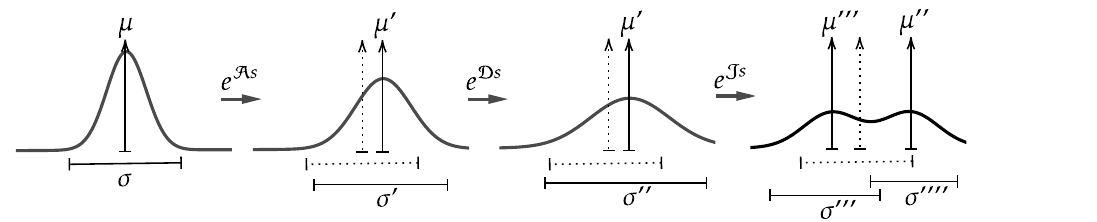}
    \caption{The forward Kolmogorov semigroup $e^\L$ on a Gaussian. Advection transports its mean and stretches its covariance, diffusion spreads its covariance, and the jump semigroup splits it.}
\end{figure*}

 \section{Problem formulation}

We consider 
% Lévy-Itô semimartingales, the most 
fully general continuous-time Markov dynamics with infinitely divisible increments; by the \emph{L\'{e}vy--Khintchine theorem}, they take the following canonical representation as a Lévy-Itô SDE:
% \begin{equation} \label{eq:sde}
%     \mathrm{d}x_t = f(x_t)\,\mathrm{d}t + g(x_t)\mathrm{d}w_t + ,
% \end{equation}
% \begin{equation} \label{eq:sde}
%     \mathrm{d}x_t = f(x_t)\,\mathrm{d}t 
%     + g(x_t)\,\mathrm{d}w_t 
%     + \int_{\mathbb{R}^d} h(x_{t-},z)\,\tilde{N}(\mathrm{d}t,\mathrm{d}z),
% \end{equation}
\begin{equation} \label{eq:sde}
    \mathrm{d}x_t = f(x_t)\,\mathrm{d}t 
    + g(x_t)\,\mathrm{d}w_t 
    + \int_{\mathbb{R}^d} z\,J(\mathrm{d}t,\mathrm{d}z),
\end{equation}
given in terms of a time-indexed state variable \(x_t \in \R^d\) and a Lévy triplet $(f,g,h)$:
 \begin{itemize}
    \item 
     The \emph{drift} $f: \R^d \to \R^d$, which models smooth, deterministic dynamics in terms of forces;
     
     \item 
     The \emph{noise coefficient} $g: \R^d \to \R^{d\times d'}$, which models continuous, small-scale random fluctuations driven by a Brownian motion $w_t \in \R^{d'}$;

     \item
     The jump term $ z\,J(\mathrm{d}t,\mathrm{d}z)$, that captures \emph{discontinuous dynamics} induced by a compound jump process $J$ with a state-dependent Lévy measure $h:\R^d \to \mathcal{M}(\R^d)$. 
\end{itemize}
  % This means an architecture that can solve \eqref{eq:sde} loses no generality over any alternative SDE representation.
% \begin{enumerate}
%     \item \textbf{(A1)}~the SDE is autonomous;
%     \item \textbf{(A2)}~jumps are compensated and have finite moments.
% \end{enumerate}

Our objective is to solve the following inverse problem: Given $\{{x}^{(m)}_{n}\}_{n=1 \, m=1}^{N\quad M}$, a sequence of $M$ independent realizations of \eqref{eq:sde} observed over $N$ uniformly distributed discrete \textit{time steps} $0, s, 2s, \dots, Ns$,
% Our objective is to solve the following inverse problem: Given $\{(x^{(n)},y^{(n)})\}_{n=1}^N$, a sequence of $N$ observed increments over a time-step $h$, i.e. $y^{(n)} = x^{(n)}_{0+h}$,
% \footnote{Unless stated otherwise, we will consider a sampling interval $h=1$ for simplicity.
% Note that the methods we describe here can be easily translated to other sampling rates, including non-uniform sampling
% }, 
learn a drift
$f_\theta:\mathbb{R}^d\to\mathbb{R}^d$,
a diffusion coefficient
$gg^\top_\vartheta:\mathbb{R}^d\to\mathbb{R}^{d\times d}$,
and a conditional generative model $h_\Theta :\mathbb{R}^d\to\mathcal{M}$
% $q_\phi(z\mid x)$ 
for the jump distribution, where $\mathcal{M} := \mathcal{M}(\R^d)$ is the space of measures in $\R^d$.
% , along with an estimated frequency $\Tilde\lambda$.
We make the following assumptions: \textbf{(A1)}~all terms are continuously differentiable and \textbf{(A2)} uniformly Lipschitz continuous and \textbf{(A3)}~all jump measures have bounded moments.
% such that the semigroup induced by=$(f_\theta, gg^\top_\vartheta, q_\phi)$ approximates the true transition kernel $p(x'\mid x)$ of the data-generating SDE.

\section{The Kolmogorov Forward Equation}

The \emph{Kolmogorov Forward Equation},
also known as the Fokker-Planck Equation,
is a linear PDE that describes the 
% time evolution of the probability density function of a particle driven by a Lévy SDE; 
% it may be interpreted as viewing
evolution of a Lévy SDE at the level of moving probability masses, with individual particles represented as Dirac delta measures and ensembles represented as smooth densities. Take the $d-$dimensional SDE in eq. \ref{eq:sde}. The KFE models the probability $p_t(x)$ of finding a particle at a given position $x \in \R^d$ at time $t$. Given a probability measure $p_0 \in \mathcal{M}$ for the initial position of the particles, the KFE describes the evolution $p_t \in \mathcal{M}$ in terms of a PDE in $d$ spatial dimensions, given by the Kolmogorov Forward \textit{generator} $\mathcal{L} : \mathcal{M} \mapsto \mathcal{M}$ 
% \footnote{We follow the literature and denote the forward operator as the adjoint of the backward generator $\mathcal{L}$.}
:
% \begin{equation}
%     \frac{\d}{\d t} p_t(x) = \mathcal{L}^*p_t = (\A + \D + \J)p_t. 
% \end{equation}
% \begin{equation}
%     \frac{\d}{\d t} p_t(x) = \mathcal{L}p_t = (\mathcal{L}^A + \mathcal{L}^D + \mathcal{L}^J)p_t. 
% \end{equation}
\begin{equation}
    \frac{\d}{\d t} p_t(x) = \mathcal{L}p_t = (\mathcal{A} + \mathcal{D} + \mathcal{J})p_t. 
\end{equation}
$\L$ describes the infinitesimal time variation of $p_t$; again by the \textit{Lévy-Khintchine Theorem}, it may be decomposed similarly to \ref{eq:sde} into an \textit{advection} term $\A$, a \textit{diffusion} term $\D$, and a \textit{jump} term $\J$:
% \begin{equation}
% \begin{aligned}
% \A p_t(x) &= -\nabla \cdot (f(x)p_t(x))
% && \text{(advection)} \\
% \D p_t(x) &= \nabla^2 : ( gg^\top(x)   p_t(x))
% && \text{(diffusion)} \\
% \J p_t(x) &= \lambda \int_{\mathbb{R}^d} \big[p_t(x - y) - p_t(x)\big] \,h(x,\d y)
% && \text{(jump)}
% \end{aligned}
% \end{equation}
% \begin{equation}
\begin{align}
\mathcal{A}\, p_t(x) &= -\nabla \cdot (f(x)p_t(x))
&& \text{(advection)} \\
\mathcal{D}\, p_t(x) &= \frac{1}{2}\nabla^2 : ( gg^\top(x)   p_t(x))
&& \text{(diffusion)} \\
% \mathcal{J} p_t(x) &= \int_{\mathbb{R}^d} \big[p_t(x - y) - p_t(x)\big] \,h(x,\d y)
% && \text{(jump)}
\mathcal{J} p_t(x) &= \int_{\mathbb{R}^d} p_t(x - y)h(x-y,y) - p_t(x)h(x,y) \, \d y
&& \text{(jump)} \label{eq:jump_op}
\end{align}
% \end{equation}
Here, $f : \mathbb{R}^d \to \mathbb{R}^d$ is the drift vector field driving deterministic transport, $gg^\top := g g^\top \in \R^{d \times d}$ is the diffusion tensor, and the jump term is governed by $h$.
The operators $\nabla$ and $\nabla^2$ denote the gradient and Hessian operators, with inner product $A \cdot B := \mathrm{Tr}(A^\top B)$ and $a:b = \sum_{i,j}a_{ij}b_{ij}$. 

The density at time $t$ is then given by the action of the \textit{semigroup} $e^{\L t}:\mathcal{M} \to \mathcal{M}$ via $p_t = e^{\L t}p_0$; likewise, the fixed-step increments in the data are given by the short-time iterate $p_{t+s} = e^{\L s}p_{t}$. The generator $\L$, on the other hand, is never directly observed and must be estimated.

\newpage 

\subsection{Lagrangian Galerkin Projection and Gaussian Mixtures}
The forward Kolmogorov generators and semigroups are infinite-dimensional objects; still, local probability masses generated by short-time stochastic dynamics are often approximately Gaussian. Inspired by Smoothed Particle Hydrodynamics, we will use a \textit{Lagrangian Galerkin discretization} to extract a tractable, finite-dimensional projection of the KFE which is amenable to learning. 

Lagrangian Galerkin methods approximate $\L$ via its action on a set of \textit{moving basis functions}; more specifically, we will use a basis of meshless \textit{Gaussian mixtures} to represent {our evolving densities}:
% (see Fig \ref{fig:kolmogorov-gaussians}):
% via their associated \textit{fundamental solutions}, also known as Green functions of transition densities.
% 
% \begin{equation}
%     G_{\mu,\Sigma}(x) := 
% \frac{1}{(2\pi)^{d/2} \, |\Sigma|^{1/2}}
% e^{\!
% -\frac{1}{2}(x - \mu)^\top \Sigma^{-1} (x - \mu)
% }, \quad  p_t(x) = \sum_k w^{(k)}G_{\mu^{(k)}_t,\Sigma_t^{(k)}}(x).
% \end{equation}
\begin{equation}
%     G_{\mu,\Sigma} := 
% \frac{1}{(2\pi)^{d/2} \, |\Sigma|^{1/2}}
% e^{\!
% -\frac{1}{2}(x - \mu)^\top \Sigma^{-1} (x - \mu) }, \quad 
p_t(x) = \sum_{k=1}^K \pi^{(k)}G_{\mu^{(k)}_t,\Sigma_t^{(k)}}(x),
\end{equation}
% \begin{equation}
%     p_0(x) = \sum_k w^{(k)}N\left(x;\mu_0^{(k)},\Sigma_0^{(k)}\right) \implies p_t(x) = \sum_k w^{(k)}e^{\L t}N\left(x;\mu_t^{(k)},\Sigma_t^{(k)}\right),
% \end{equation}
where each $k-$indexed component $\pi G_{\mu,\Sigma}$ is a normal distribution parametrized by its weight $\pi \in \R$, its mean $\mu \in \R^d$ and its covariance $\Sigma \in \R^{d\times d}$. In practice, this representation will reduce the original infinite-dimensional evolution to a corresponding dynamics in the Gaussian parameters $[\pi,\mu,\Sigma]$, which NKEs will take as inputs to describe the KFE as moving, spreading, and splitting Gaussians. 
% NKEs approximate the action of $\L$ on these functions as they move, spread, and split
% These Gaussians represent large ensembles as moving mixtures, while individual particles may be represented in the limit $\varepsilon \rightarrow 0$ as delta measures $G_{\mu,\varepsilon I}$; in the data, they will be approximated from particle clusters. 
% by describing how their parameters change over time.

The main tool for this will be approximate closure of Gaussians under $\L$.  
% Advection/diffusion are decribed in terms of the approximate closedness of Gaussians under their action:
For sufficiently small $s$, the image of a localized Gaussian under the advection–diffusion semigroup is well approximated by a Gaussian with transported mean and covariance (see the Appendix for a rigorous statement):
% This will allow us to define dynamics on their projection on a Gaussian submanifold given by parameter space:
% arising from \ref{eq:advection-diffusion} via convolution with the jump measure:
\begin{equation}\label{eq:advection-diffusion}
            % e^({\Ah + \Dh)}G_{\mu,\Sigma} \approx G_{\mu + h\Delta \mu,\Sigma + h\Delta \Sigma}.
            e^{\mathcal{A}s + \mathcal{D}s}G_{\mu,\Sigma} \approx G_{\mu + s\Delta \mu,\Sigma + s\Delta \Sigma}.
            % \qquad e^{ \mathcal{L}^J}G_{\mu,\Sigma} \approx \sum_{k'} w^{(k')}N\left(x;\mu^{(k')},\Sigma^{(k')}\right),
\end{equation}
 In contrast, the image of a Gaussian under the jump semigroup is given by the convolution with the jump measure, and cannot generally be described with a single Gaussian. Still, if the jump semigroup $e^{\mathcal{J}s}$ is given by a mixture $H^s$, its action leads to another mixture, as per closure under convolution:
 % By modelling $h$ itself as a full Gaussian Mixture, we may obtain the representation of the jump semigroup from the closure of Gaussians under convolution.
 % In practice, jumps map a single Gaussian onto many.
 \begin{equation}
    e^{\J s}G_{\mu,\Sigma} = H^s * G_{\mu,\Sigma}, 
                % = \sum_{k'} \hat\pi^{(k')} G_{\hat\mu^{(k')},\hat\Sigma^{(k')}},
 \end{equation}
% because they form a basis, this could then be used to model the dynamics over any other suitable function.
% ; these are given in terms of three components:
We may formalize this intuition by defining the \textit{projection} of the dynamics onto the space of Gaussian Mixtures.
Let $\mathcal{G}^{(k)}$ 
be the space of K-element Gaussian mixtures, with $\mathfrak{g}\in\mathcal{G}^{(k)}
\Longleftrightarrow \mathfrak{g} = (\pi^{(k)},\mu^{(k)},\Sigma^{(k)})_{k=1}^K$. Let also  
 $\Pi_{K}: \mathcal{M} \to \mathcal{G}^K$ and $\Pi_K^{\dagger}: \mathcal{G}^K \to \mathcal{M}$ be the projection and lifting operators between the space of measures and their corresponding mixture representation:
  \begin{equation}
    \Pi_{K}: p \to (\pi^{(k)},\mu^{(k)},\Sigma^{(k)})_{k}, 
     \quad \Pi_K^{\dagger}: (\pi^{(k)},\mu^{(k)},\Sigma^{(k)})_k \to \sum_{k=1}^K\pi^{(k)}G_{\mu^{(k)},\Sigma^{(k)}}.
 \end{equation}
 For $K=1$, this is equivalent to moment matching and can be directly calculated:
 \begin{equation}
     \Pi_{1} p_t = (1, \langle x, p_t \rangle, \langle (x-\mu)(x-\mu)^\top, p_t \rangle),
 \end{equation}
 % by taking inner products with tensor powers of $x$;
 where $\mu$ is the mean of $p_t$ and $1$ denotes unit mass. For larger $K$, regularization is required to ensure uniqueness and continuity (see Appendix). NKEs will learn the projected generator $\widehat\L: \mathcal{G}^{(k)} \to \mathcal{G}^{(k)}$: 
 \begin{equation}
     \widehat\L = \Pi \circ \L  \circ \Pi^\dagger.
 \end{equation}

\subsection{Advection Generator}
The action of the advection operator takes a simple form: it transports the mass under the Gaussian according to the drift $f$.
 The action of the projected advection operator $\widehat{\mathcal{A}} = \Pi_1 \circ \mathcal{A} \circ \Pi_1^{\dagger}$ follows classically from its adjoint $\A^\star$, which can be obtained via integration by parts:
 \begin{equation}
     \langle \phi, \A \varphi\rangle = -\int_{\R^d}\phi(x) \, [\nabla \cdot (f(x) \varphi (x))]\, \d x =
     \int_{\R^d} [f\cdot\nabla\phi]\, \varphi (x)\, \d x = \langle \A^\star\phi, \varphi\rangle 
 \end{equation}
For the mean, we may take the inner product with the test function $\phi = x$:
\begin{equation}
    \langle x, \A \,G_{\mu,\Sigma} \rangle =  \langle  \A^\star x, G_{\mu,\Sigma}(x) \rangle = \langle f(x)\, , G_{\mu,\Sigma} \rangle, 
\end{equation}
while for the covariance, we take the inner product with the test function $\phi = (x-\mu)(x-\mu)^\top$:
\begin{multline}
    \left\langle (x-\mu)(x-\mu)^\top, \A \, G_{\mu,\Sigma} \right\rangle
    =
    \left\langle \A^\star (x-\mu)(x-\mu)^\top, G_{\mu,\Sigma} \right\rangle
    \\ =
    \left\langle f(x)(x-\mu)^\top + (x-\mu) f(x)^\top, \,G_{\mu,\Sigma} \right\rangle .
\end{multline}
Combined, these expressions lead to full action of the projected advection on a Gaussian's parameters:
\begin{equation}
\widehat\A \, (\pi, \mu,\Sigma) = 
\Big(0, \, \langle f,  G_{\mu,\Sigma} \rangle, \,
\left\langle
(x-\mu)f(x)^\top + f(x)(x-\mu)^\top, G_{\mu,\Sigma}
\right\rangle \Big)
 \end{equation}
Intuitively, we have that the mean is transported according to a smoothed vector field and the covariance is stretched and rotated by its divergence. Its total mass does not change.
% \begin{figure*}[h!]\label{fig:kolmogorov-gaussians}
%     \centering
%     \includegraphics[width=0.99\linewidth]{Images/diagram-adv-diff-jump2.pdf}
%     \caption{The forward Kolmogorov semigroup $e^\L$. Intuitively, the advection-diffusion semigroup acts on a Gaussian's mean and covariance, while the jump semigroup split it.}
% \end{figure*}
\subsection{Diffusion Generator}
The action of the diffusion operator likewise admits a simple interpretation: it spreads mass towards local equilibrium, possibly at space-dependent rates. Its projection
$\widehat{\mathcal{D}} = \Pi_{\{1\}} \circ \mathcal{D} \circ \Pi^\dagger_{\{1\}}$
follows analogously from its adjoint, obtained via integration by parts:
\begin{equation}
    \langle \phi, \D \varphi\rangle
    =
    \int_{\R^d}
    \phi(x)\,
    \left[\frac{1}{2}\nabla^2 : \left(gg^\top(x)\varphi(x)\right)\right]\,\d x
    =
    \int_{\R^d}
    \left[\frac{1}{2}\nabla^2 \phi(x) : gg^\top(x)\right]
    \varphi(x)\,\d x
    =
    \langle \D^\star\phi,\varphi\rangle .
\end{equation}
We may calculate its action on the mean by again testing against $\phi = x$:
\begin{equation}
    \langle x, \D G_{\mu,\Sigma} \rangle
    =
    \langle \D^\star x, G_{\mu,\Sigma}\rangle
    =
    0,
\end{equation}
since $\nabla^2 x = 0$. For the covariance, analogous derivation with
$\phi = (x-\mu)(x-\mu)^\top$ leads to:
\begin{multline}
    \left\langle
    (x-\mu)(x-\mu)^\top,
    \D G_{\mu,\Sigma}
    \right\rangle
    =
    \left\langle
    \D^\star
    (x-\mu)(x-\mu)^\top,
    G_{\mu,\Sigma}
    \right\rangle
    =
    \left\langle
    gg^\top(x),
    G_{\mu,\Sigma}
    \right\rangle,
\end{multline}
leading to the interpretation of the projected diffusion operator as spreading out a Gaussian's support by adding to its covariance, without any effect on its mean or total mass:
\begin{equation}
\widehat\D(\pi,\mu,\Sigma)
=
\left(
0, \,
0, \,
\left\langle
gg^\top,
G_{\mu,\Sigma}
\right\rangle
\right).
\end{equation}

\subsection{Jump Semigroup}

Unlike advection and diffusion, the action of the jump generator on a Gaussian cannot be generally described in terms of a new Gaussian with perturbed mean and covariance. For small jumps $y\leq\epsilon$ with smooth distributions, a Taylor expansion yields that the cumulative effect of $\J_{<\epsilon}$ is well approximated by an state-dependent advection-diffusion given by the first two moments of $h$(see the appendix):
\begin{equation}\label{eq:small-jump-taylor}
% \text{(Small jumps)} \quad
\J_{<\epsilon} p_t 
% = \int_{\|y\|<\epsilon}\!\!\big[p_t(x-y)-p_t(x)\big]h(x,dy)
\approx - \nabla\cdot (\beta(x) \, p_t(x)) + 
\frac12 \nabla^2:\big(\gamma(x) \,p_t(x)\big),
% \text{ where } Q_\varepsilon := \int_{\|y\|<\varepsilon} yy^\top h(dy),
\end{equation}
where $\beta$ and $\gamma$ stand for the mean and covariance of $h$.
For large jumps, however, this no longer holds. Instead, we will model the full jump semigroup over the time span $s$. For state-independent $h$, this may be written in terms of the finite-time Poisson distribution $H ^s$:
\begin{equation}
    e^{\mathcal{J}s}p_t = H^s * p_t, \, \text{ where } H ^s :=  e^{-\lambda s}\sum_{i=0}^\infty \frac{s^i h^{*i}}{i!} 
    % \text{ and } \lambda = \int_{\R^d} h(dy),
\end{equation}
where $\lambda$ is the total mass of $h$, and $h^{*n}$ denotes $n$-fold iterated convolution, with
$h^{\ast 0}=\delta_0$ corresponding to trajectories that do not
jump during the interval $[t,t+s]$.
For $H^s$ also a $K$-Gaussian mixture $(\alpha^{(k)},\beta^{(k)},\gamma^{(k)})_k$, the projected action of the jump semigroup on a Gaussian may be succinctly described in terms of the convolution between two mixtures:
\begin{equation}
    e^{\widehat \J s} (\pi, \mu,\Sigma) = \left(\alpha^{(k)},\beta^{(k)},\gamma^{(k)}\right)_k
*
\left(\pi,\mu,\Sigma\right)
=
\left(
\alpha^{(k)} \pi,\,
\beta^{(k)}+\mu,\,
\gamma^{(k)}+\Sigma
\right)_{k}. 
\end{equation}
This may be visualized in a simple context as follows: If particles have, say, a 10\% chance to jump right by one unit, the finite-time distribution will be modelled as two Gaussians, one with 90\% mass at the origin, and another with 10\% mass one unit to the right.

% State-dependent jumps $h(x,\d y)$, however, introduce two substantial complications, which will break this convolution structure. First is the two differing jump rates \textit{into} and \textit{out of} $p$, caused by the terms $h(x-y,y)$ and $h(x,y)$ in eq. \ref{eq:jump_op}.

If the jump distribution is instead space-dependent, we may instead consider $H^s(x)$ to have spatially varying parameters $[\alpha^{(k)}(x),\beta^{(k)}(x),\gamma^{(k)}(x)]_k$.
 However, this breaks the convolution structure from above: different points underneath the Gaussian will be exposed to different jump distributions, leading to a largely intractable object.
 
 We may recover tractability by leveraging the rapid decay of concentrated Gaussians and the smoothness of the space-varying jump parameters. We assume that the jump measures under the integral are well approximated by their distribution at the mean, around which most of the integral is concentrated:\begin{equation}
    \int_{\mathbb{R}^d} G_{\mu,\Sigma}(x-y)h(x-y,y) - G_{\mu,\Sigma}(x)h(x,\d y)
    \approx \int_{\mathbb{R}^d} \big[G_{\mu,\Sigma}(x - y) - G_{\mu,\Sigma}(x)\big] \,h(\mu,\d y)
\end{equation}
This will allow us to approximate the projected semigroup as the convolution with the Gaussian mixture that represents the jumps at the mean $\mu$, $H^s(\mu) = [\alpha^{(k)}(\mu),\beta^{(k)}(\mu),\gamma^{(k)}(\mu)]_k $:
\begin{equation}
    e^{\widehat \J s} (\pi, \mu,\Sigma) \approx \left(\alpha^{(k)}(\mu),\beta^{(k)}(\mu),\gamma^{(k)}(\mu)\right)_k 
*
(\pi,\mu,\Sigma)
\end{equation}
For sharp $(\Sigma \rightarrow 0)$ Gaussians $p_t$ and slowly varying parameters $\alpha, \beta, \gamma$, this approximation is provably sharp (see the Appendix). 

\newpage

 \begin{figure*}[h!]\label{fig:kolmogorov-gaussians}
    \centering
    \includegraphics[width=1.0\linewidth]{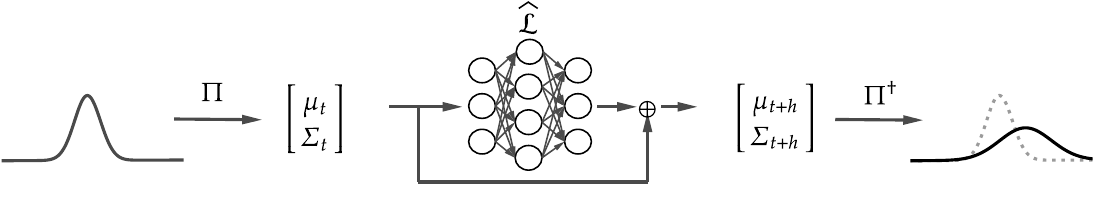}
    \caption{Neural Kolmogorov Equations. NKEs operate as deterministic Neural ODEs on the finite-dimensional (projected) space of Gaussian mixtures and model flows on their mean and covariance.}
\end{figure*}

\section{Neural Kolmogorov Equations}
% 
% We will model these objects via their action on a set of \textit{basis functions};
% \subsection{Lagrangian Galerkin formulation}

\emph{Neural Kolmogorov Equations} describe SDEs by modelling the associated advection, diffusion, and jump operators in the Kolmogorov formalism; more specifically,
NKEs learn the \textit{projection} of the forward operators on the space of Gaussians, based on the dynamics imposed by the projected Kolmogorov equation on the Gaussian-projected density $\widehat p$:
\begin{equation}
    \frac{\mathrm{d}}{\mathrm{d}t}
% \begin{pmatrix} \mu \\ \Sigma \end{pmatrix}_t
% (\pi,\mu,\Sigma)
\widehat p
= \widehat \L \, \widehat p
% \begin{pmatrix} \mu \\ \Sigma \end{pmatrix}_t
% (\pi,\mu,\Sigma)
% , \quad 
% \begin{pmatrix} \mu \\ \Sigma \end{pmatrix}_{t+s}
% = e^{\widehat \L s} \begin{pmatrix} \mu \\ \Sigma \end{pmatrix}_t
\end{equation}
NKEs are composed of three parts, \textit{advection}, \textit{diffusion} and \textit{jump}, which are trained in four steps:

\textbf{1. Cluster particles into approximate Gaussians and separate jumps.} The trajectory data, given in terms of particles, is clustered into empirical/weak approximations for Gaussians. Large jumps are filtered out via a thresholding procedure to allow for separate treatments for the advection-diffusion generator and the jump semigroup. 
 
 % \textbf{Advection ${\widehat{\mathcal L}}^A_{\theta}
\textbf{2. Learn the projected advection generator $\widehat\A_{\theta}
 % : \R^{d + d\times d} \to \R^{d + d\times d}
: \mathcal{G} \to \mathcal{G} $ from mean evolution.}
 NKEs approximate the lifted advection generator in terms of a neural drift $f_\theta: \R^d \to \R^d$, which plays an analogous role to the drift in Neural SDEs. $\widehat\A_\theta$ transports single localized Gaussians, and is trained by tracking the movement of their means $\mu$ over time.
 
 % \textbf{Diffusion $\widehat{\mathcal L}^*_D_\vartheta$.}
 % \textbf{Diffusion $\widehat\D_\vartheta
  \textbf{3. Learn the projected diffusion generator $\widehat\D_\vartheta: \mathcal{G} \to \mathcal{G}
 % : \R^{d + d\times d} \to \R^{d + d\times d}
 $ from covariance evolution.}
NKEs approximate the lifted diffusion generator as parametrized in terms of a (squared) noise coefficient $gg^\top_\vartheta: \R^{d} \to \R^{d\times d}$, analogous to the noise coefficient in Neural SDEs. $\widehat\D_\vartheta$ acts on single localized Gaussians via spreading, and is trained by tracking the time variation of their covariances.

  % \textbf{Jump $e^(\widehat{\mathcal L}^*_J_\phi)$.}
  % \textbf{Jump $e^({\J})_\phi
  \textbf{4. Learn the projected jump semigroup $\exp({\widehat\J s)}_\Theta : \mathcal{G} \to \mathcal{G}
  % : \R^{d + d\times d} \to \R^{(d + d\times d)\times k'}
  $ from filtered-out jump increments.}
  $e^{\widehat\J s}$ is approximated in terms of a conditional generative model 
  $H^s_\Theta: \R^{d} \to (\R^{d + d\times d})^k$ 
  that outputs the parameters for a Gaussian mixture modelling the short-time jump distribution at each Gaussian center $\mu$. It splits Gaussians via convolution, and is trained on filtered-out jump residues.

  \subsection{Operator splitting}
The basis of our strategy will be the interpretation of separable learning as \textit{operator splitting}. 
We observe that Euler-Maruyama approximations -- which consist on a drift step, followed by the addition of Gaussian noise -- correspond to first-order Lie-Trotter \cite{trotter1959product} splitting for $\L = \A+\D+\J$:
\begin{equation}
    \text{(Lie-Trotter)  }\;\quad
 e^{\L s} \approx \, 
 e^{\J s} \, 
 e^{\D s}\, 
 e^{\A s}.
 % + \mathcal{O}(h^2).\\
\end{equation}
In contrast, NKEs are based on a Strang-type splitting \cite{strang1968construction}, second-order in the absence of jumps:
% \begin{align}
% \text{Lie-Trotter/Euler-Maruyama:}\quad
% e^{h(\mathcal{L}_A + \mathcal{L}_B + \mathcal{L}_D)}
% &= e^{h\mathcal{L}_J}\, e^{h\mathcal{L}_D}\, e^{h\mathcal{L}_A}
% + \mathcal{O}(h^2), \\[0.5em]
% \text{Strang/Kolmogorov:}\quad
% e^{h(\mathcal{L}_A + \mathcal{L}_D + \mathcal{L}_J)}
% &= e^{\frac{h}{2}\mathcal{L}_A}
% \, e^{\frac{h}{2}\mathcal{L}_D}
% \, e^{h\mathcal{L}_J}
% \, e^{\frac{h}{2}\mathcal{L}_D}
% \, e^{\frac{h}{2}\mathcal{L}_A}
% + \mathcal{O}(h^3).
% \end{align}
\begin{align}
\text{(Strang)}\quad
e^{\L s} \approx
& \, e^{\A s/{2}}
\, e^{\J s}
\, e^{\D s}
\, e^{\A s/{2}}.
% + \mathcal{O}(h^3).
\end{align}
This scheme instead resembles higher-order numerical schemes, such as Itô-Heun, composed of half-advection steps before and after the addition of noise. 

This will allow us to conceive more accurate loss functions by leveraging the invertibility of the advection semigroup, leading to the use of second-order trapezoidal differences in the data.
% and inner products against test functions:
% \begin{equation}
% \label{eq:half-step}
% e^{-\A s / 2}\, p_{t+1}
% \;\approx\;
% e^{\D s}e^{\A s / 2}\, p_{t},
% \end{equation}
These can then be combined with inner products to obtain the dynamics of the moments of Gaussians:
\begin{equation}
\label{eq:half-step}
\langle \phi(x), e^{-\A s / 2}\, p_{t+1}(x) \rangle 
\;\approx\;
\langle \phi(x), e^{\J s }e^{\D s }e^{\A s / 2}\, p_{t}\rangle.
\end{equation}
% , \quad \langle (x-\mu)(x-\mu)^\top, e^{-\A s / 2}\, p_{t+1}(x) \rangle \\
% \;\approx\;
% \langle (x-\mu)(x-\mu)^\top, e^\D e^{\A s / 2}\, p_{t}\rangle. 
In fact, operator splitting will allow us to apply and train each component separately.

\newpage 

\subsection{ Gaussian approximation and jump thresholding}

\subsubsection{Gaussian approximation}
  In practice, we do not directly observe the evolution of Gaussian densities during training. Instead, they will be represented empirically as particle clusters: 
   we cluster the particles $\{x_n^{(m)}\}$ into $C$ clusters $\{\Tilde G^{(c)}_n\}_{c,n}$ with $\{\Tilde\mu^{(c)}_n\}$ and $\{\Tilde\Sigma^{(c)}_n\}$ as means and covariances respectively. 
   
   The clusters can be non-exclusive, and are built from the $\kappa_{nn}$ nearest neighbors of selected particles, leading to weak approximations to an element in our basis. We then track the changes in mean and covariance of these clusters between observations to estimate their time derivative:
\begin{equation}
    \Tilde\mu^{(c)}_n = \frac{1}{\kappa_{nn}}\sum_{x_n^{(m)} \in \Tilde G^{(c)}_n} \xx m n, \quad \Tilde\Sigma^{(c)}_n = \frac{1}{\kappa_{nn}}\sum_{x_n^{(m)} \in \Tilde G^{(c)}_n} (x^{(m)}_n - \Tilde\mu^{(c)}_n)(x^{(m)}_n - \Tilde\mu^{(c)}_n)^\top
\end{equation}
These clusters act as Dirac delta ensembles, which will appear as discrete sums under inner products.
\begin{equation}
    % \C_n^{(c)} = \sum_{\x^{(m)}_n\in \Tilde N^{(c)}_n} \delta_{x^{(m)}_n}, \, \implies \, \langle a(x),  \Tilde N ^{(c)}_n(x)\rangle 
    % = \sum_{x^{(m)}_n\in \Tilde N^{(c)}_n} a(x^{(m)}_n).
    \Tilde G_n^{(c)} = \sum_{x^{(m)}_n\in \Tilde{G}^{(c)}_n} \delta_{x^{(m)}_n}, \, \implies \, \langle \phi(x),  \Tilde G ^{(c)}_n(x)\rangle 
    = \sum_{x^{(m)}_n\in \Tilde{G}^{(c)}_n} \phi(x^{(m)}_n).
\end{equation}
Intuitively, this strategy may be seen as using these Gaussian-like clusters as local denoised estimators for the dynamics, so that groups of particles move in an approximately deterministic manner.

\subsubsection{Jump thresholding}
% We combine splitting with jump thresholding, which will allow us to deal with jumps separately by filtering out large increments. This allow us to separate ordinary advection-diffusion from jumps, allowing for each component to be learned separately and more accurately.    

% As opposed to advection-diffusion, the action of the generator $\J$ on regular Gaussians cannot be represented in a simple manner. 
We will treat jumps separately via thresholding.
% to afterwards model their semigroup directly via the jump distribution and its GM representation. 
Under sufficiently small timesteps, diffusion-driven increments concentrate around a characteristic local scale, while jump events induce comparatively large displacements. 

Let $\Delta x_n^{(m)} := \|x_{n+1}^{(m)} - x_n^{(m)}\|$ denote the magnitude
of the observed increment. We flag each increment as jump-contaminated
when it exceeds the typical scale of its peers at the same time step, as
measured by the median absolute deviation (MAD)
\cite{rousseeuw1993alternatives}. Writing $\med_{m}$ for the median across $m$-indexed particles, we define
\begin{equation}
    \chi_n^{(m)} \;:=\; \mathbf{1}\!\left\{
        \Delta x_n^{(m)} - \med_{m}\!\Delta x_n^{(m)}
        \;>\;
        \kappa_{\text{sens}}\kappa_{\text{Gaussian}} \cdot
        \med_{m''}\!\bigl|\Delta x_n^{m''} - \med_{m'}\!\Delta x_n^{{m'}}\bigr|
    \right\},
\end{equation}
with $\kappa_{\text{sens}} > 0$ controlling the detector sensitivity, with
$\kappa_{\text{Gaussian}} = 1/\Phi^{-1}(0.75) \approx 1.4826$ the standard Gaussian
calibration. Detected jumps ($\chi=1$) are excluded when learning the
advection--diffusion generator and retained when learning the jump semigroup.

% We therefore consider the transition magnitudes $d$ and estimate their typical diffusive scale using the median absolute deviation (MAD):
% \[
% m_t = \mathrm{median}_n(d_n^t),
% \quad
% \hat{\sigma}_t
% =
% 1.4826\,
% \mathrm{median}_n\!\left(
% |d_n^t - m_t|
% \right),
% \quad d_n^t = \|x_{n+1}^{(m)} - x_n^{(m)}\|,
% \]
% where the factor \(1.4826\) ensures consistency with the standard deviation under Gaussian fluctuations \cite{hampel1974influence,rousseeuw1993alternatives}.
% Transitions are then classified according to the binary criterion
% \[
% \chi_n^t
% =
% \mathbf{1}\!\left[
% d_n^t > m_t + \kappa \hat{\sigma}_t
% \right],
% \]
% where \(\kappa > 0\) controls the detector sensitivity. We interpret \(\chi_n^t = 1\) as a likely jump-contaminated transition and \(\chi_n^t = 0\) as diffusion-dominated. These jumps will then be filtered out when learning the advection-diffusion generator, but included in the clusters when learning jumps.
% Since the MAD is median-based, it is substantially less sensitive to outliers and heavy-tailed contamination than variance-based estimators, making it particularly suitable for robust separation of sparse jump events from dominant diffusive fluctuations \cite{hampel1974influence,rousseeuw1993alternatives}. 
% We now extend the Lagrangian operator learning framework to the nonlocal component of the generator. While advection and diffusion are identified through the evolution of local first- and second-order moments of moving Gaussian probes, jumps manifest as nonlocal mass transfer that cannot be captured by these local statistics.

In the Kolmogorov picture, this jump thresholding implies a decomposition of the Lévy measure $h$. For any cutoff $\varepsilon>0$, the Lévy measure admits the canonical decomposition
\(
h=h_{<\varepsilon}+h_{\ge\varepsilon},
\)
separating small and large jumps and separating the generator accordingly with
\(
\J=\J_{<\varepsilon} + \J_{\ge\varepsilon}\).
%  A Taylor expansion yields that for mild jumps and small $\epsilon$, the cumulative effect of $\L^{J<\varepsilon}$ is well approximated by an effective state-dependent diffusion:
% \begin{equation}\label{eq:small-jump-taylor}
% \int_{\|y\|<\varepsilon}\!\!\big[\phi(x+y)-\phi(x)\big]h(dy)
% \approx
% \frac12 \operatorname{tr}\!\big(Q_\varepsilon \nabla^2\phi(x)\big), \text{ where } Q_\varepsilon := \int_{\|y\|<\varepsilon} yy^\top h(dy),
% \end{equation}
% with covariance $Q_\varepsilon$. In our framework, this contribution is absorbed into the learned diffusion component. 
In practice, any jumps too small to be detected will be modelled as effective diffusion, as discussed.
% then, we may isolate and filter out jump-dominated events by thresholding increments:
% \begin{equation}
%     \bigl\{\hat{x}_n^{(j)}\bigr\}_{n,j}
%     := \bigl\{\, x_n^{(m)} \;:\; \|x_{n+1}^{(m)} - x_n^{(m)}\| > \tau \,\bigr\}
% \end{equation}
% for some cutoff $\tau\geq\epsilon$ and where $j$ ranges over the subset of particles that exceed the threshold at time $n$, and we suppress the dependence of this index set on $n$ for brevity. We use Gaussian heuristics for the choice of threshold.

% These jumps themselves may be clustered into gaussians:

\subsection{Advection-diffusion Learning} 
% \subsubsection{Advection learning}
In the absence of jumps, the lifted generator
$\widehat{\A} + \widehat{\D}$ defines a deterministic
flow on $\mathcal{G}_{\{1\}}$. From the action of the lifted generators on Gaussians (Appendix~D), the
parameters $(\mu, \Sigma)$ evolve under advection--diffusion as the ODE
\begin{equation}
    \frac{\mathrm{d}}{\mathrm{d}t}
% \begin{pmatrix} \mu \\ \Sigma \end{pmatrix}
(\pi_t,\mu_t,\Sigma_t)
= (\widehat\A + \widehat\D) \,
% \begin{pmatrix} \mu \\ \Sigma \end{pmatrix}
(\pi,\mu,\Sigma).
\end{equation}
This means we can now extract the projected generators $\L$ from our data by estimating the time derivative of the mean and covariance or the surrogate Gaussians with finite differences; we may inherit the second-order accuracy of Strang splitting by using a trapezoidal symmetrical scheme.
% differences, as opposed to first-order forward differences.
% \begin{equation}
% \label{eq:moment-ode}
% \frac{\mathrm{d}}{\mathrm{d}t}
% \begin{pmatrix} \mu \\ \Sigma \end{pmatrix}
% =
% \begin{pmatrix}
% \bigl\langle f, \mathcal{N}_{\mu, \Sigma} \bigr\rangle \\[2pt]
% \bigl\langle f(x)(x - \mu)^{\top} + (x - \mu) f(x)^{\top},\, \mathcal{N}_{\mu, \Sigma} \bigr\rangle
% + \bigl\langle g^{2},\, \mathcal{N}_{\mu, \Sigma} \bigr\rangle
% \end{pmatrix}.
% \end{equation}
% Intuitively,  Training then reduces to estimating $(f_{\theta}, g^{2}_{\vartheta})$ such that the
% flow~\eqref{eq:moment-ode} reproduces the observed cluster moments, leading to a higher-dimensional analogue of Neural ODE training. In generator terms:
  % Note that  that these identities do not rely on Euler-Maruayama-like assumptions that restrict accuracy to first order; inspired by Strang splitting, we use higher-order implicit trapezoidal differences to estimate these derivatives numerically from data.
  
  We observe that the means evolve solely due to advection. 
  % according to a smoothed vector field. 
  Strang-splitting training then consists of enforcing a "{meet-me-halfway}" condition: we advect clusters forward and backward in time, then minimize the mismatch between their means. This leads to the advection loss $L_{\text{adv}}$ for the drift $f_\theta$:
  % \footnote{Our implementation uses a slightly more intricate and robust implementation, detailed in the Appendix.}.
% \begin{equation}
%     \mmu{c}{n} + \frac{1}{2}\Tilde{f}_\theta_n^{(c)} \approx \mu^{(c)}_{n+1} - \frac{1}{2}\Tilde{f}_\theta_{n+1}^{(c)} \, , \; \text{ where } \, \Tilde{f}_\theta_n^{(c)} = \int  f_\theta\varphi_n^{(c)} \, \d x.
% \end{equation}
\begin{equation}
    L_{\text{adv}}(\theta) = \sum_{c,n}\left|\left|\mu^{(c)}_{n+1} - \mu^{(c)}_{n} - \frac{s}{2}\left( \langle f_\theta, \Tilde G^{(c)}_{n+1} \rangle + \langle f_\theta, \Tilde G^{(c)}_n \rangle \right)\right|\right|_2^2.
\end{equation}
% \begin{equation}
%     {L}_\text{adv}(\theta) = \sum_{m,n} \Bigg|\Bigg| x_{n+1}^m  - \left(x_n^m + f_\theta(x_n^m)\right) \Bigg|\Bigg|_2^2
% \end{equation}
% \subsection{Diffusion Learning}
% note that the advection network $f_\theta$ is being spatially smoothed inside the loss, so as to encourage it to learn the \textit{unsmoothed} vector field.
% \subsubsection{Diffusion learning}
% We can then train our advection component by training the drift $f_\theta$.

An analogous calculation may be carried out for diffusion. 
% Having filtered out jumps, the dynamics of the covariance tensor depend on the drift and noise in a predictable manner, 
% with an advection-induced deformation of the cluster in (in $f_\theta$) and a diffusion-induced spreading (in $gg^\top$). 
Crucially, having already obtained an estimate $f_\theta$ from the first-moment dynamics, we may subtract the drift contribution to isolate the contribution of $gg^\top_\vartheta$.
Again, we approximate these quantities empirically from clusters and match trapezoidal differences: 
\begin{equation}
{L}_\text{diff}(\vartheta)
=
\sum_{c,n}
\Big\|
\Tilde \Sigma_{n+1}^{(c)} - \Tilde\Sigma_n^{(c)} - \frac{s}{2}\left(\Delta^\A \Tilde \Sigma_n^{(c)} + \Delta^\A \Tilde\Sigma_{n+1}^{(c)} + \Delta^\D \Tilde\Sigma_n^{(c)} + \Delta^\D \Tilde\Sigma_{n+1}^{(c)}\right)
\Big\|_F^2,
\end{equation}
where
\begin{multline}
    \Delta^\A \Tilde\Sigma_n^{(c)} = \left\langle
(x - \mu_n^{(c)})f_\theta(x)^\top + f_\theta(x)(x - \mu_n^{(c)})^\top, \Tilde G_n^{(c)} \right \rangle, \quad
\Delta^\D \Tilde\Sigma_n^{(c)} = \left\langle gg^\top_\vartheta(x), \Tilde G_n^{(c)}(x) \right \rangle.
\end{multline} 
Note that both training objectives are parallelizable over both particles and time-steps; by pre-computing clusters, the cost of evaluating this loss becomes not substantially larger than that of computing a corresponding EM loss.
Note also that this procedure allows for general diffusion terms with off-diagonal and cross-effect terms, unlike EM-based methods. In particular, because no maximum-likelihood estimation is necessary, learning coupled and non-diagonal covariances becomes simple.

\subsection{Jump learning}

We learn the finite-time jump law $H^s$ directly from thresholded jump increments. For sufficiently small $s$, retained increments are assumed jump-dominated, since diffusion and drift contributions scale respectively as $O(\sqrt{s})$ and $O(s)$. We define the time-$s$ increment for the projected KFE:
\begin{equation}
\left(\pi_{t+s}^k,\mu_{t+s}^k,\Sigma_{t+s}^k\right)_k = e^{\widehat \J s} (\pi_{t},\mu_{t},\Sigma_{t}).
\end{equation}
We may then fit these increments with Gaussian mixtures to learn the projected semigroup. Note that approximating the entire increment as a jump is only accurate to leading order.

% We cluster particles and parameterize the finite-time jump law $H^s(\cdot|\mu)$ as a Gaussian mixture over $K_J$ components, with mixture parameters output by a network conditioned on the pre-jump mean $\mu$. The resulting distribution models the jump law with which a localized Gaussian centered at $\mu$ must be convolved to obtain its post-jump image.
We start by defining the jump targets $y$, which consist of a particle's future position if it was flagged as a jump, or its current position if otherwise. By doing this, we include both jump and not jump increments in order to model the full semigroup, implicitly modelling also the jump probability:
\begin{equation}
    y_n^{(m)} = x_n^{(m)} + \chi_n^{(m)}(x_{n+1}^{(m)} - x_n^{(m)})
\end{equation}
We then cluster particles into localized Gaussian packets $\widetilde{G}_n^{(c)}$. For each cluster, we define the associated target cloud
\begin{equation}
    \Tilde Y_n^{(c)}
    :=
    \left\{
        y_n^{(m)}
        :
        x_n^{(m)}
        \in
        \widetilde{G}_n^{(c)}
    \right\}.
\end{equation}
This target cloud is interpreted as an empirical sample from the action of the jump semigroup on the source cluster:
\begin{equation}
    \tilde Y_n^{(c)}
    \sim
    e^{\mathcal J s}
    \widetilde{G}_n^{(c)}
    \approx
    H^s(\cdot|\mu_n^{(c)})
    *
    G_{\widetilde{\mu}_n^{(c)},\widetilde{\Sigma}_n^{(c)}}
\end{equation}
\begin{equation}
     H^s(\cdot|\mu_n^{(c)})
    * G_{\widetilde{\mu}_n^{(c)},\widetilde{\Sigma}_n^{(c)}} = 
    \sum_{j=1}^{K_J}
    \alpha_j(\mu_n^{(c)})\,
    G_{\mu_n^{(c)}+\beta_j(\mu_n^{(c)}),\,\Sigma_n^{(c)}+\gamma_j(\mu_n^{(c)})}.
\end{equation}
Since Gaussian convolution adds means and covariances, recovering the jump law reduces to local deconvolution. We parameterize $H^s_\Theta:R^d\to(R^{d+d\times d})^{K_J}$ with a Mixture Density Network (MDN) \cite{bishop1994mdn}, which outputs the weights, means and covariances of a conditional Gaussian mixture:
\begin{equation}
    H^s_\Theta(\cdot|x)
    =
    \sum_{j=1}^{K_J}
    \alpha_j(x)
    \,
    G_{\beta_j(x),\Gamma_j(x)}.
\end{equation}
The network is trained by maximizing the conditional log-likelihood of the target clouds at their respective cluster centers:
\begin{equation}
    L_{\mathrm{jump}}(\Theta)
    =
    -
    \sum_{n,c}
    \sum_{y \in Y_n^{(c)}}
    \log
    \left[
        H^s_\Theta(\cdot|\mu_n^{(c)})
        *
        G_{\widetilde{\mu}_n^{(c)},\widetilde{\Sigma}_n^{(c)}}
    \right](y).
\end{equation}
% \begin{equation}
%     \mathcal L_{\mathrm{jump}}(\Theta)
%     =
%     -
%     \sum_{n,m}
%     \sum_{m\in \widetilde G^{(c)}_n}
%     \log
%     H^s_\Theta
%     \left(
%         y^{(m,c)}_n
%         \mid
%         \mu^{(c)}_n
%     \right)
%     *
%     G_{\mu,\Sigma}.
% \end{equation}
The convolution with the source cluster can be evaluated analytically by addition of means and covariances. Unlike classical expectation-maximization methods for Gaussian-mixture fitting, this objective is fully differentiable and naturally accommodates state-dependent conditioning within end-to-end gradient-based training.

% Note also that this strategy avoids the EM overfitting pathology. The EM loss
% $\|x^{(m)}_{n+1} - x^{(m)}_n - h\, f_{\theta}(x^{(m)}_n)\|^{2}$ admits
% the trivial solution $f_{\theta}(x^{(m)}_n) = (x^{(m)}_{n+1} - x^{(m)}_n) / h$, which fits every transition exactly and collapses the residual diffusion estimate to zero (Appendix~C.5). The Gaussian-integrated loss~\eqref{eq:adv-loss} admits no such collapse: any candidate $f_{\theta}$ must match the cluster-mean dynamics in the smoothed sense, a macroscopic constraint that cannot be satisfied by trajectory-level memorisation.

\begin{figure*}[h!]\label{fig:diagra-jump-sketch}
  \centering
    \includegraphics[width=1.08\linewidth]{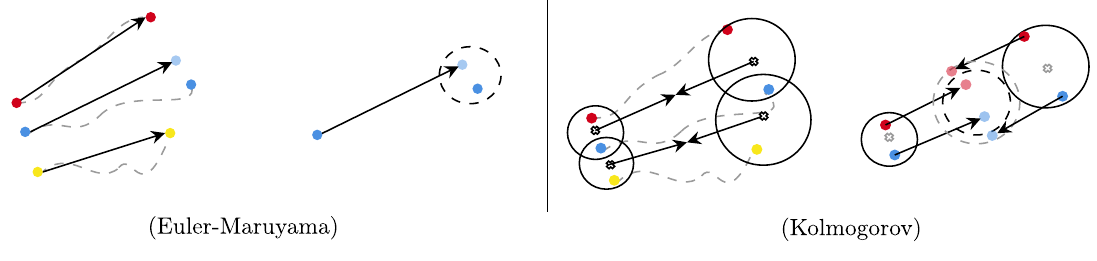}
\label{fig:learning_advection_diffusion}
    \caption{Advection-diffusion learning. Euler-Maruyama methods attempt to directly match the movement of particles between observations as drift; any remaining residue is modelled as noise. In contrast, Kolmogorov methods model the movement of particle clusters, modelling the difference in their means as advection and the remaining difference in their covariances as diffusion.}
\end{figure*}

\begin{figure*}[h!]\label{fig:diagra-jump-sketch}
  \centering
    \includegraphics[width=1.08\linewidth]{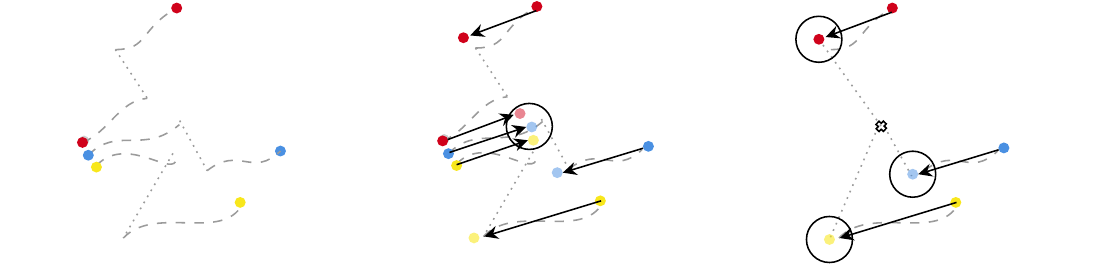}
    \label{fig:learning_jump}
    \caption{Jump learning. NKEs cluster jump origins, then model their destinations with a Gaussian mixture. The result should compose the jump distribution with the cluster's own covariance.}
\end{figure*}

\newpage 

\section{Experiments}

To validate NKEs as a conceptual framework for SDE learning, we evaluate them on a set of small and large-scale SDE-learning benchmarks in the field of scientific machine learning. For the small synthetic benchmarks, we combine three types of noise -- Geometric Brownian motion, coupled Brownian motion, and jump diffusion -- with three canonical multidimensional ODE systems -- the chaotic Lorenz oscillator, the Black-Scholes model, and the multidimensional double-well model. For the large-scale experiments, we tackle two coarse-graining problems from molecular dynamics, in which a heavy tracked particle moves in a 3D particle bath.
% , leading to unresolved sotchastic behavior on the observed particle.

Each model is trained on a set of 1024 trajectory observations. After training, 256 trajectories are generated then evaluated against a test set of 256 additional observations via three metrics: the mean squared error in the mean vector at each time, mean squared error in the covariance matrix at each time, and mean (dimensionwise) Wasserstein distance at each time. We compare NKEs to three baselines:
\begin{itemize}
    \item Parallelizable \textit{Euler-Maruyama} models, based on the classical EM scheme as implemented in \cite{shen2025neural}; these are equipped with a drift network and a diagonal noise network;
    \item Parallelizable  \textit{Trajectory Flow Matching} models, based on the implementation in \cite{zhang2024trajectory}; they are equipped with a drift network and a \textit{scalar} uncertainty network, used for the noise;
    % \item Autoregressive \textit{Neural SDE} models, as present in [?], also equipped with 
    \item Autoregressive \textit{SDE-GAN} models, inspired by the implementation in \cite{kidger2021neural}; these are equipped with a generator Neural SDE a Neural CDE discriminator.
\end{itemize}
Training is performed using the AdamW optimizer \cite{loshchilov2017decoupled} for all baselines. Each experiment was run on five seeds, and the mean and standard deviation of each
metric was reported. 
The appendix contains detailed specifications for each experiment. Due to the presence of jumps in some of the benchmarks, the baseline models had to be adapted from their original contexts; we provide implementation-specific details in the Appendix.

\section{Analysis and limitations}\label{sec:Limitations}

The results show that NKEs achieve consistently strong performance across the experiments, displaying generally superior accuracy when compared to the baseline methods, while maintaining competitive training times. In particular, NKEs deal with general forms of driving noise, including geometric and coupled Brownian motions, as well as diverse jump processes, and do not require {parametric assumptions on its form}.
This provides a unified treatment of Gaussian jumps and Poisson
processes within the same
generative framework.
NKEs also provide a more accurate framework for simulation-free Neural SDEs and EM methods: Instead of tracking noisy individual trajectories, NKEs model entire clusters of neighboring particles as probability masses, whose approximately deterministic dynamics may be learned more easily. 
% In this sense, our approach may be seen as a local analogue of moment-matching methods, which learn the variation in moments of the entire distribution.

While NKEs offer compelling advantages when learning drift and noise from data, it has limitations. First is the requirement for constant sampling rates in the presence of jumps: NKEs' noise learning step learns a fixed iteration of the semigroup $e^{\J s}$, restricting the method to fixed sampling rates. Extending jump-learning to general non-uniform sampling would require a tractable representation of the jump \textit{generator} $\J$, which seems out of reach for our Lagrangian-GMM representations, but could possible be achieve in other bases. Still, in the absence of jumps, NKEs may be trivially extended to non-constant sampling rates.

The second limitation lies in the Gaussian mixture modelling of distributions. The dimensional limitations of GMMs make Kolmogorov models as implemented here not adequate for systems in extremely high dimensions, as is the case with generative models. Likewise, while GMMs intuitively extend the range of operators we can model in the projected manifold, in practice the projection operator for $K>1$ is often ill-posed, leading to non-uniqueness in representation. When coupled with state-dependent distributions, we observe that this leads to discontinuities and poor convergence when learning complex jump patterns. Overcoming this issue would require regularization of the GMM models across the state space. 
% Still, for either non-space dependent complex jumps or space-dependent simple jumps, the current implementation of NKEs expands on the range of Lévy processes approachable through neural means.

% \newpage

% \newpage 

\begin{table}[h!]
\centering
\small
\caption{Lorenz oscillator with diagonal geometric Brownian noise.}
\begin{tabular}{lcccc}
\toprule
Method & Mean MSE $\downarrow$ & Covariance MSE $\downarrow$ & Wasserstein $\downarrow$ & TT $\downarrow$ \\
\midrule
TFM        & $1.8111 \pm 0.0401$  & $0.7516 \pm 0.0580$      & $0.3294 \pm 0.0454$ & $39.1 \pm 1.3$\\
SDE-GAN       & $2.6020 \pm 0.2698$ & $15.7637 \pm 0.0787$ & $0.511 \pm 0.0049$ & $2584 \pm 65$  \\
Euler-Maruyama     & $1.8027 \pm 0.0113$  & $0.9494 \pm 0.0451$      & $0.3065 \pm 0.0066$ & $\mathbf{33.9 \pm 0.9}$ \\
Kolmogorov (ours)  & $\mathbf{0.0471 \pm 0.0016}$ 
                & $\mathbf{0.0755 \pm 0.0089}$ 
                & $\mathbf{0.0391 \pm 0.0032}$ & $42.4 \pm 0.7$ \\
\bottomrule
\end{tabular}
\end{table}

\vspace{-0.5cm}

\begin{table}[h!]
\centering
\small
\caption{Multivariate Black-Scholes with coupled Geometric Brownian Motion.}
\begin{tabular}{lcccc}
\toprule
Method & Mean MSE $\downarrow$ & Covariance MSE $\downarrow$ & Wasserstein $\downarrow$ & TT $\downarrow$ \\
\midrule
TFM        & $\mathbf{0.0824 \pm 0.0123}$  & $1.8421 \pm 0.0612$      & $0.2145 \pm 0.0187$ & $41.3 \pm 1.5$\\
SDE-GAN         & $0.4873 \pm 0.0211$ & $1.9923 \pm 0.1021$ & $0.412 \pm 0.0038$ & $2490 \pm 70$  \\
Euler-Maruyama     & $0.1951 \pm 0.0108$  & $0.9732 \pm 0.0489$      & $0.2261 \pm 0.0105$ & $\mathbf{35.7 \pm 1.1}$ \\
Kolmogorov (ours)  & $\mathbf{0.0685 \pm 0.0104}$ & $\mathbf{0.0523 \pm 0.0061}$ & $\mathbf{0.0997 \pm 0.025}$ & $44.1 \pm 0.9$ \\
\bottomrule
\end{tabular}
\end{table}

\vspace{-0.5cm}

% \begin{table}[h!]
% \centering
% \small
% \caption{Multivariate Ornstein-Uhlenbeck with space-homogeneous, multimodal jumps.}
% \begin{tabular}{lcccc}
% \toprule
% Method & Mean MSE $\downarrow$ & Covariance MSE $\downarrow$ & Wasserstein $\downarrow$ & TT $\downarrow$ \\
% \midrule
% TFM        & $2.1043 \pm 0.0521$  & $1.0214 \pm 0.0723$      & $0.4127 \pm 0.0382$ & $40.5 \pm 1.4$\\
% Neural SDE         & $0.1985 \pm 0.0082$ & $18.5431 \pm 0.0915$ & $0.0634 \pm 0.0057$ & $2612 \pm 80$  \\
% Euler-Maruyama     & $2.0816 \pm 0.0149$  & $1.1532 \pm 0.0526$      & $0.3891 \pm 0.0083$ & $\mathbf{34.8 \pm 1.0}$ \\
% Kolmogorov (ours)  & $\mathbf{0.0612 \pm 0.0023}$ 
%                 & $\mathbf{0.0937 \pm 0.0102}$ 
%                 & $\mathbf{0.0485 \pm 0.0041}$ & $43.2 \pm 0.8$ \\
% \bottomrule
% \end{tabular}
% \end{table}

% \vspace{-0.8cm}
\begin{table}[h!]
\centering
\small
\caption{Multivariate double-well system with constant diffusion and Gaussian distributed jumps.}
\label{tab:double-well}
\begin{tabular}{lcccc}
\toprule
Method            & Mean MSE $\downarrow$ & Covariance MSE $\downarrow$ & Wasserstein $\downarrow$ & TT (s) $\downarrow$ \\
\midrule
TFM               & $\mathbf{0.0063 \pm 0.0034}$          & $0.0276 \pm 0.0036$          & $0.1809 \pm 0.0083$         & $1525\pm 63$          \\
SDE-GAN              & $0.8334 \pm 0.8334$          & $0.6720 \pm 0.0360$          & $0.4091 \pm 0.1244$         & --          \\
Euler-Maruyama    & $0.0113 \pm 0.0057$          & ${0.0160 \pm 0.0021}$ & $0.1907 \pm 0.0155$         & $\mathbf{1152 \pm 47}$ \\
Kolmogorov (ours) & $\mathbf{0.0062 \pm 0.0021}$ & $\mathbf{0.0085 \pm 0.0057}$           & $\mathbf{0.109 \pm 0.0084}$ & $1545 \pm 11$          \\

\bottomrule
\end{tabular}
\end{table}

\vspace{-0.5cm}

\begin{table}[h!]
\centering
\small
\caption{Rarefied coarse-graining benchmark. `--' indicates no meaningful convergence.}
\label{tab:ensemble_metrics}
\begin{tabular}{lcccc}
\toprule
Method            & Mean MSE $\downarrow$ & Covariance MSE $\downarrow$ & Wasserstein $\downarrow$ & TT (s) $\downarrow$ \\
\midrule
TFM                 & $1.0277 \pm 0.0916$                           &      $1.1150 \pm 2.6132$             & $0.5121 \pm 0.0024$    & $270.5 \pm 4.7$          \\
SDE-GAN & -- & -- & -- & -- \\ 
Euler-Maruyama    & $0.0423 \pm 0.0036$                           &      $1.1155 \pm 0.2612$             & $0.3121 \pm 0.0024$                         & $\mathbf{206.3 \pm 3.4}$ \\
Kolmogorov (ours) & $\mathbf{0.0343 \pm 0.0132}$                  & $\mathbf{0.1365 \pm 0.0021}$                            & $\mathbf{0.2321 \pm 0.0154}$                & $323.1 \pm 25.3$         \\

\bottomrule
\end{tabular}
\end{table}

\newpage 
\begin{figure}[h!]
    \centering
    \begin{subfigure}[t]{0.32\textwidth}
        \centering
        \includegraphics[width=0.98\linewidth]{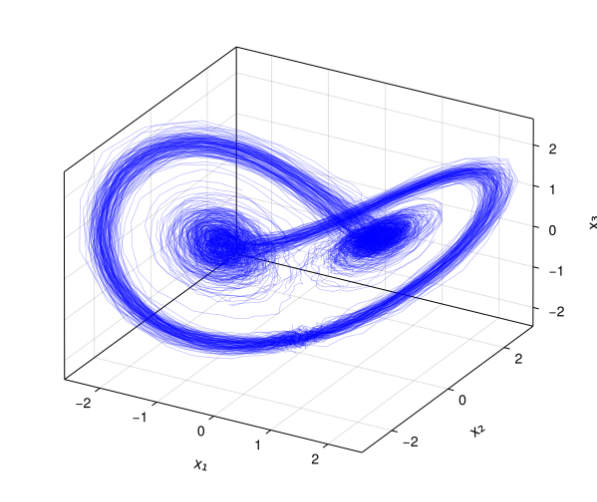}
        \caption{Ground truth.}
    \end{subfigure}
    \hfill
    \begin{subfigure}[t]{0.32\textwidth}
        \centering
        \includegraphics[width=0.98\linewidth]{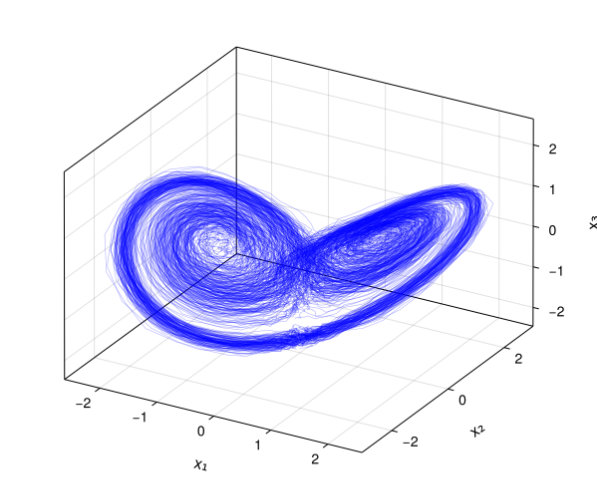}
        \caption{Euler-Maruyama.}
    \end{subfigure}
    \begin{subfigure}[t]{0.33\textwidth}
        \centering
        \includegraphics[width=0.98\linewidth]{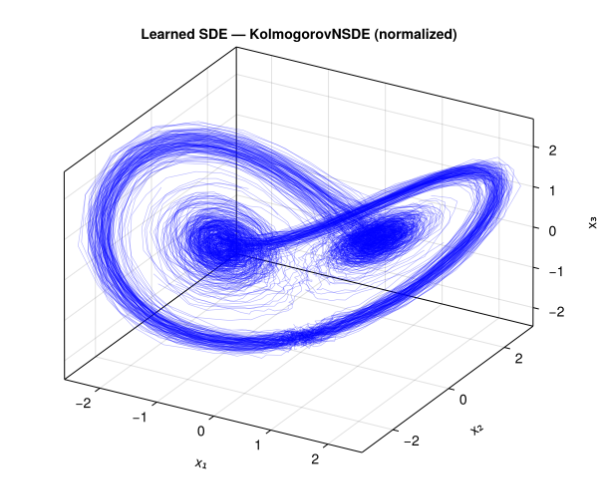}
        \caption{Kolmogorov.}
    \end{subfigure}
    \caption{Results for the Lorenz system.}
    \label{fig:lorenz}
\end{figure}
% \vspace{-0.5cm}

\begin{figure}[h!]
    \centering
    \begin{subfigure}[t]{0.32\textwidth}
        \centering
        \includegraphics[width=0.98\linewidth]{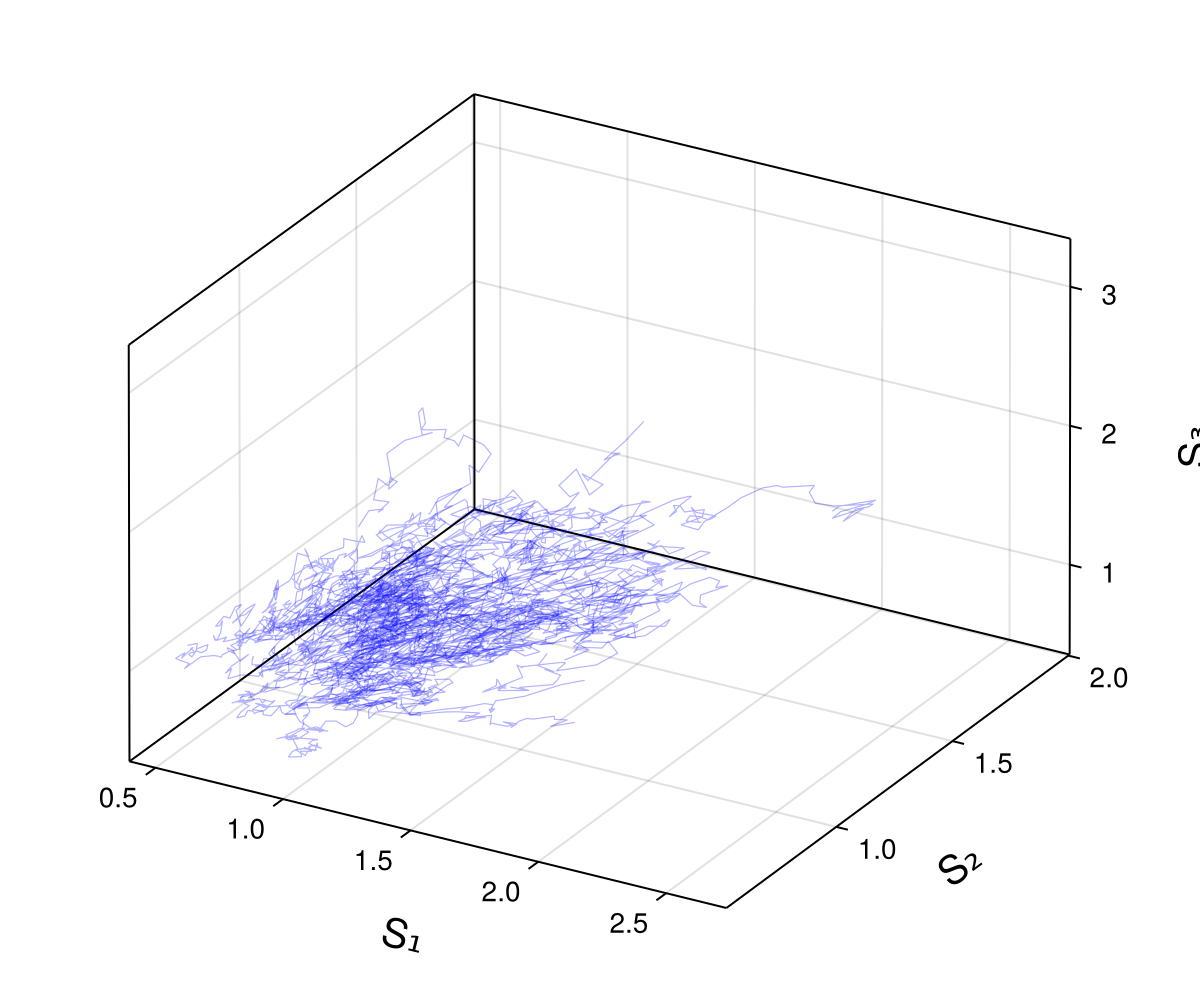}
        \caption{Ground truth.}
    \end{subfigure}
    \hfill
    \begin{subfigure}[t]{0.32\textwidth}
        \centering
        \includegraphics[width=0.98\linewidth]{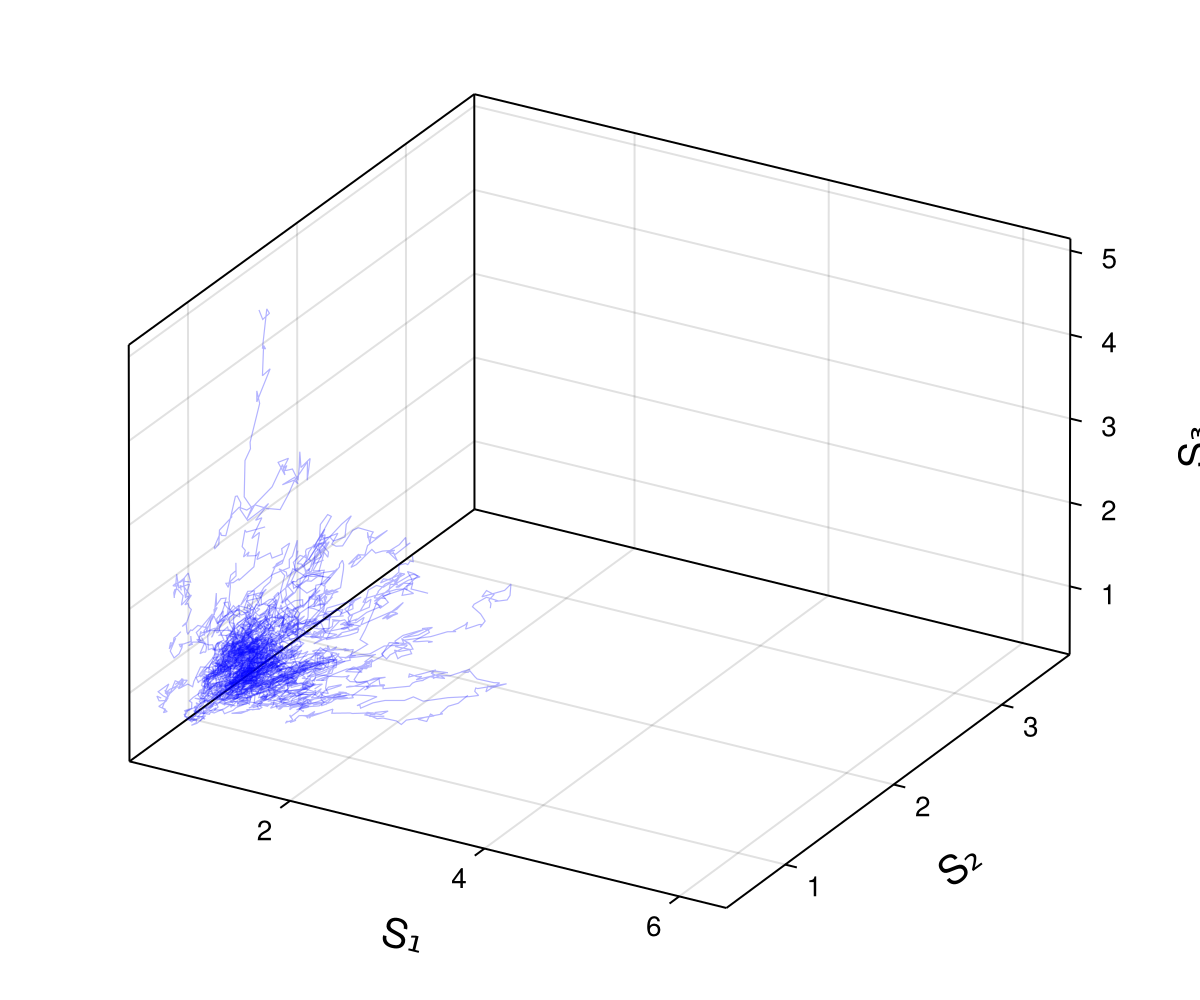}
        \caption{Euler-Maruyama.}
    \end{subfigure}
    \begin{subfigure}[t]{0.33\textwidth}
        \centering
        \includegraphics[width=0.98\linewidth]{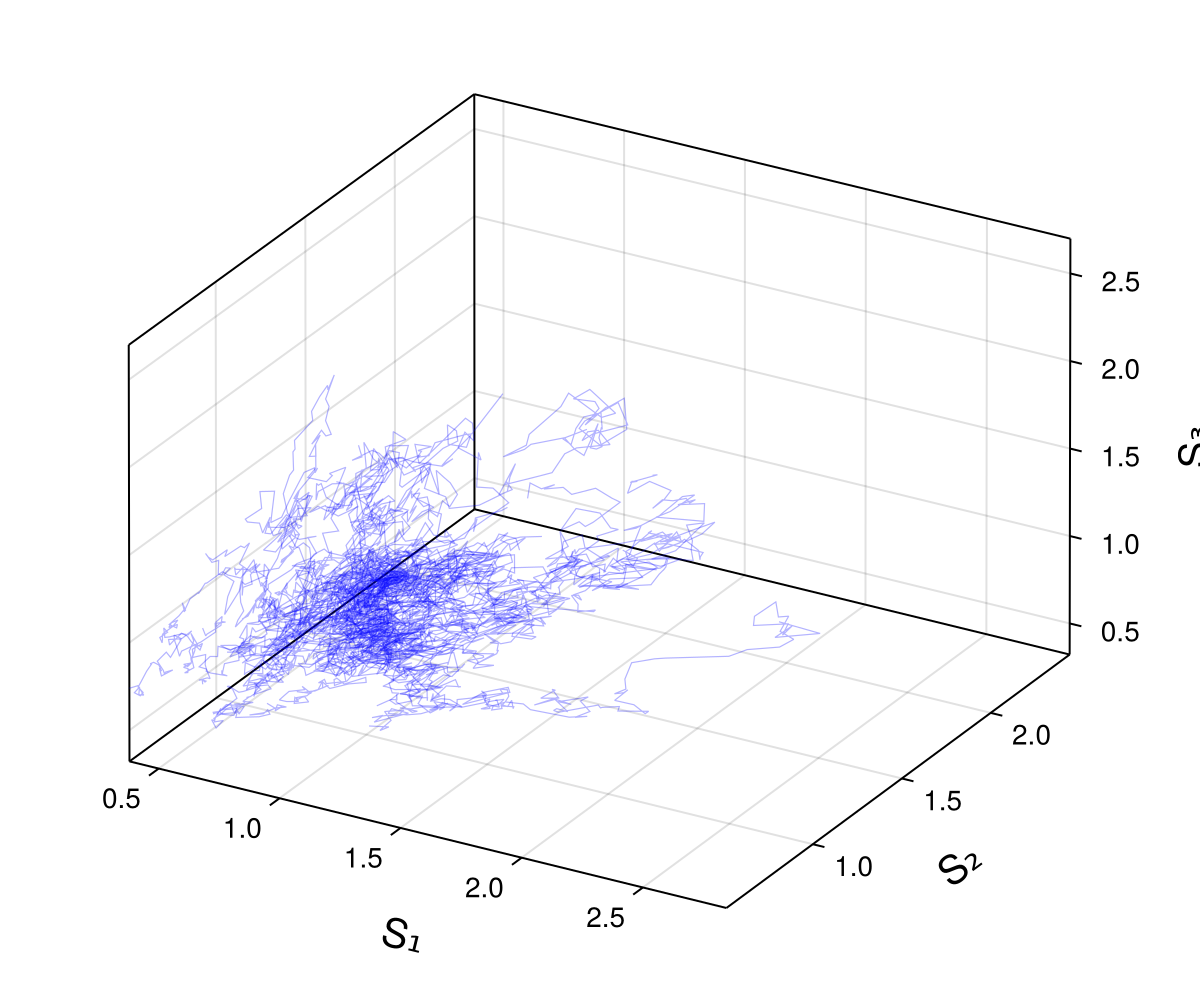}
        \caption{Kolmogorov.}
    \end{subfigure}
    \caption{Results for the Black-Scholes system.}
    \label{fig:lorenz}
\end{figure}
\begin{figure}[h!]
    \centering
    \begin{subfigure}[t]{0.32\textwidth}
        \centering
\includegraphics[width=0.98\linewidth]{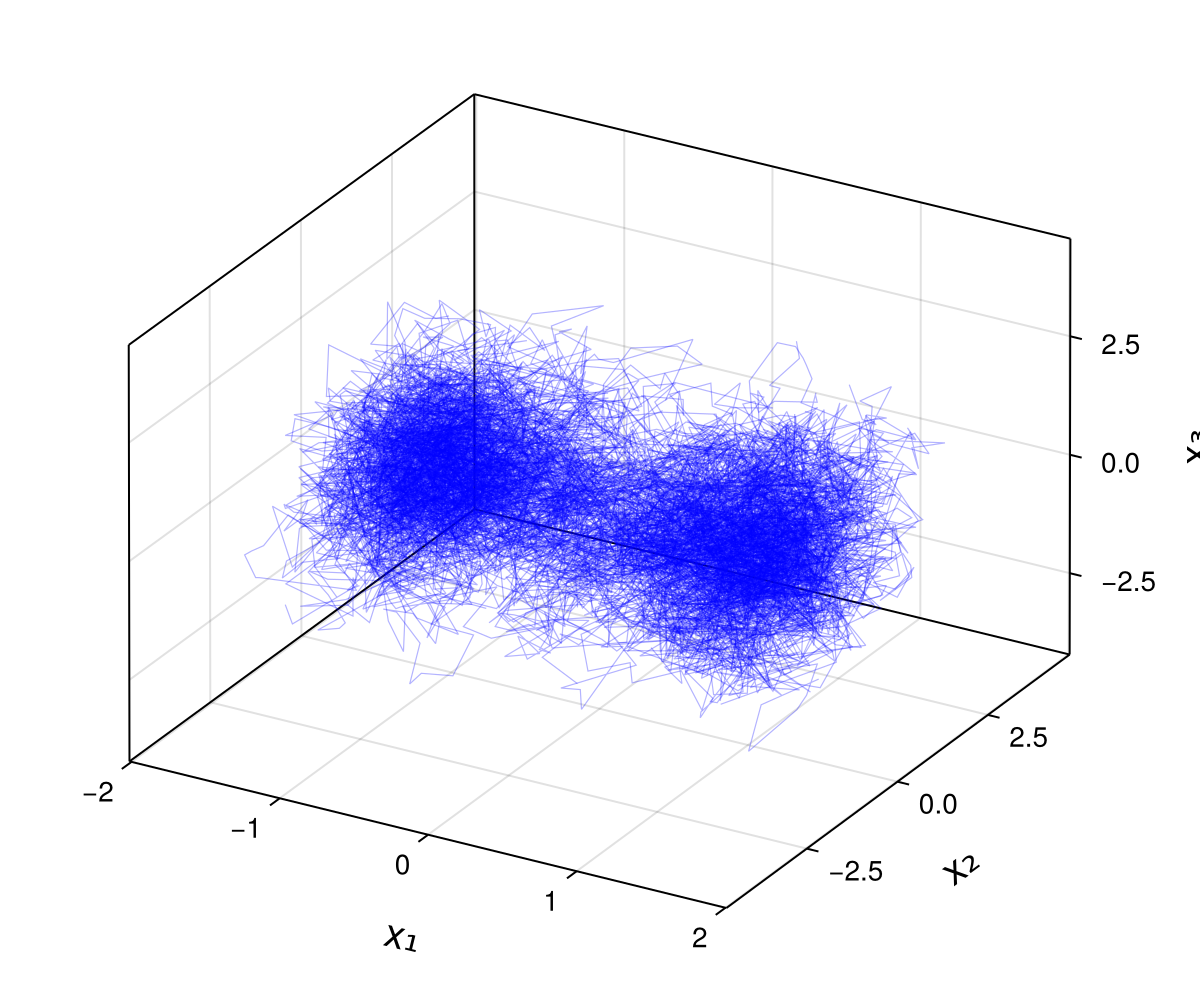}
        \caption{Ground truth.}
    \end{subfigure}
        \begin{subfigure}[t]{0.33\textwidth}
        \centering
\includegraphics[width=0.98\linewidth]{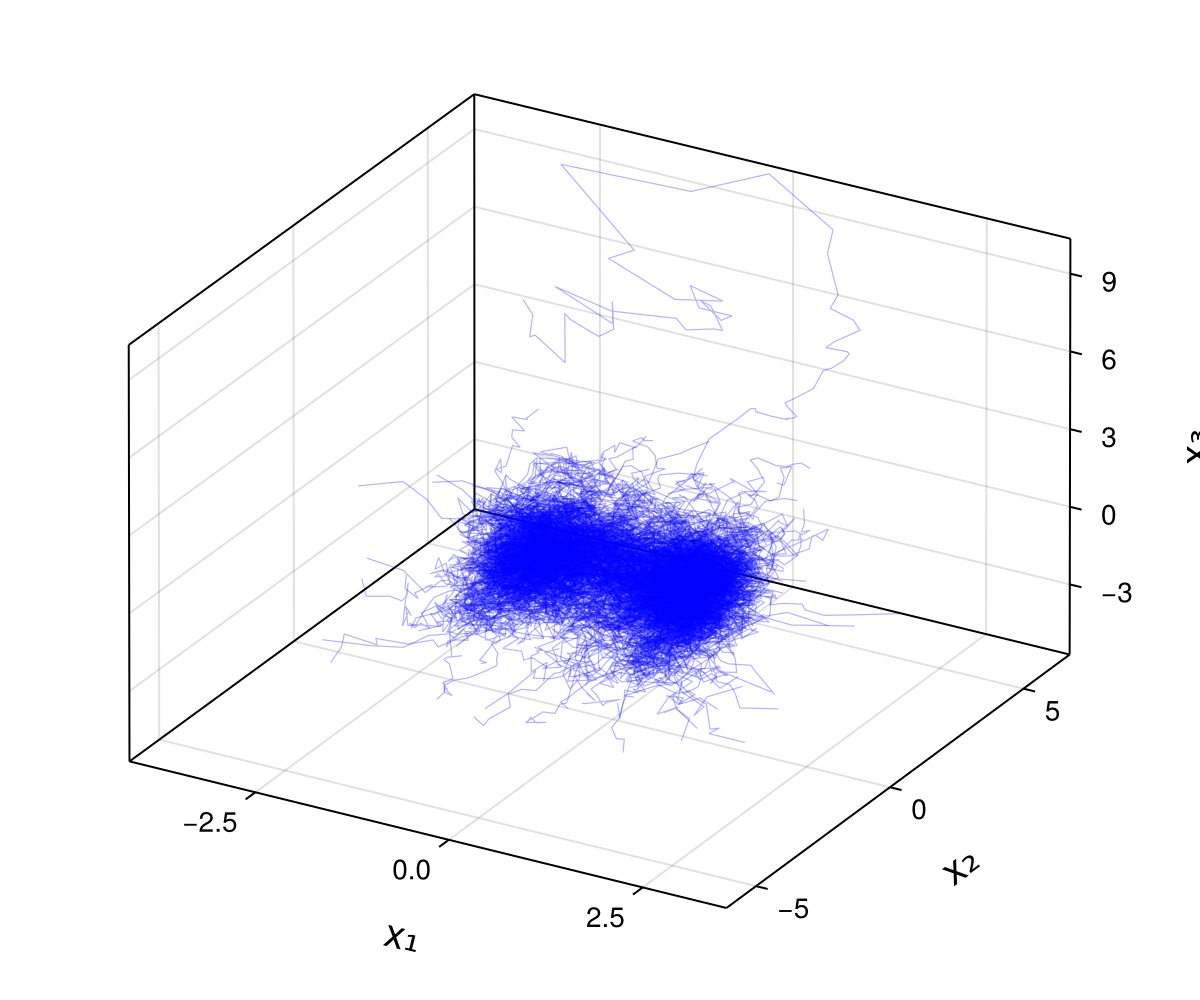}
        \caption{Kolmogorov.}
    \end{subfigure}
    \begin{subfigure}[t]{0.32\textwidth}
        \centering
        \includegraphics[width=0.98\linewidth]{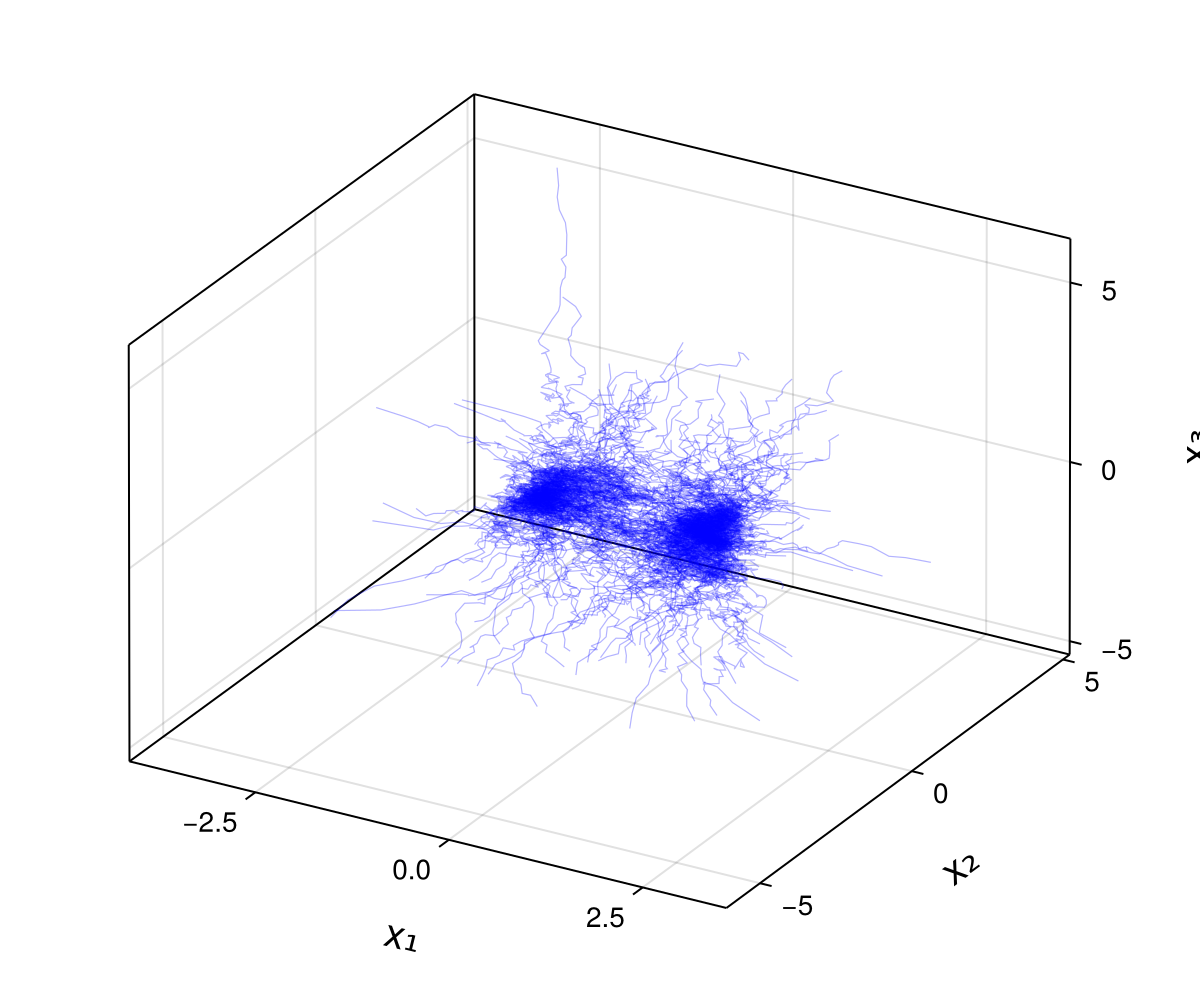}
        \caption{Euler-Maruyama.}
\end{subfigure}
    \caption{Results for the Double well system.}
    \label{fig:double-well}
\end{figure}
    
% \vspace{-0.5cm} 
\begin{figure}[h!]
    \centering
    \begin{subfigure}[t]{0.32\textwidth}
        \centering
        \includegraphics[width=0.98\linewidth]{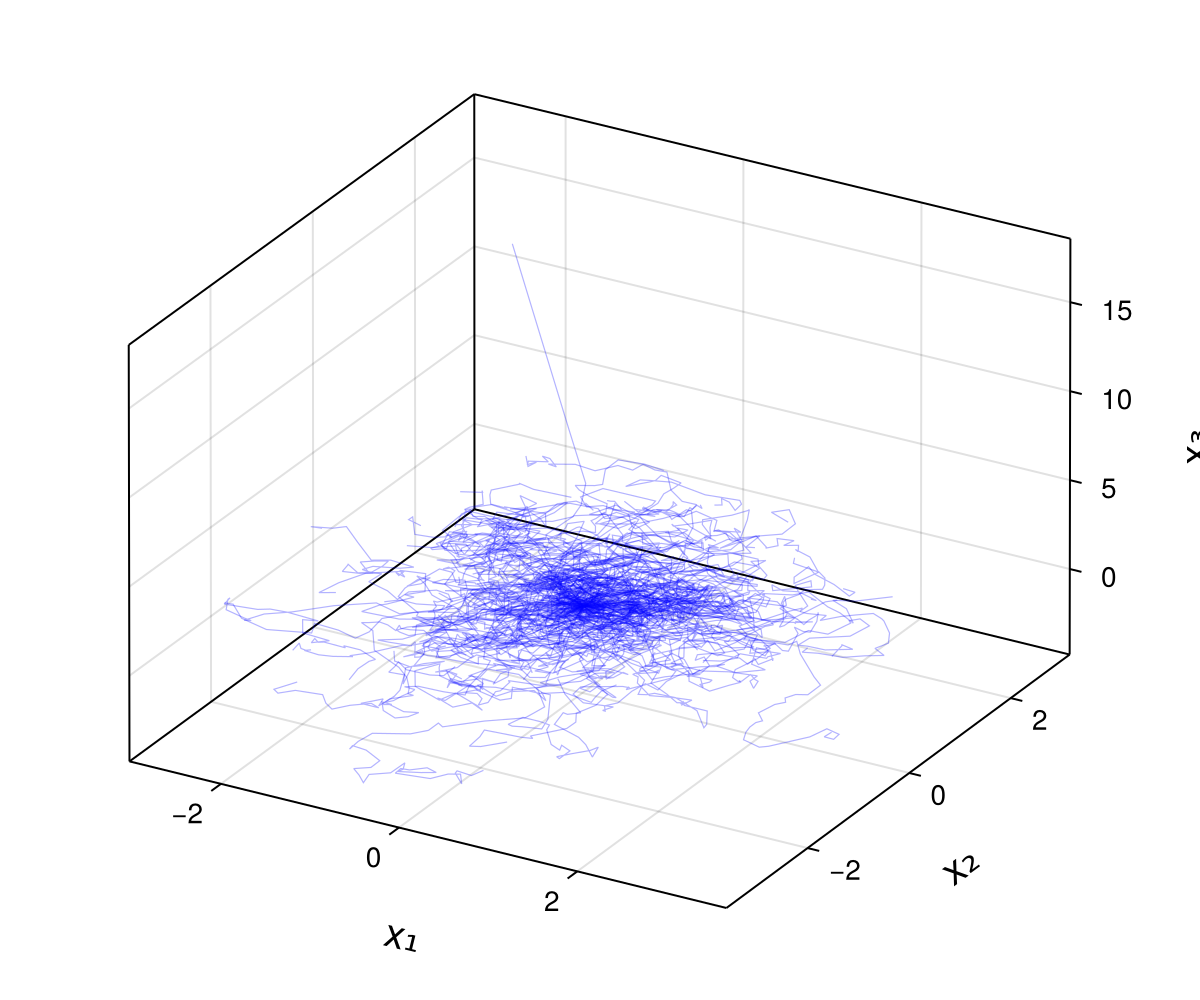}
        \caption{Ground truth.}
    \end{subfigure}
    \hfill
    \begin{subfigure}[t]{0.32\textwidth}
        \centering
        \includegraphics[width=0.98\linewidth]{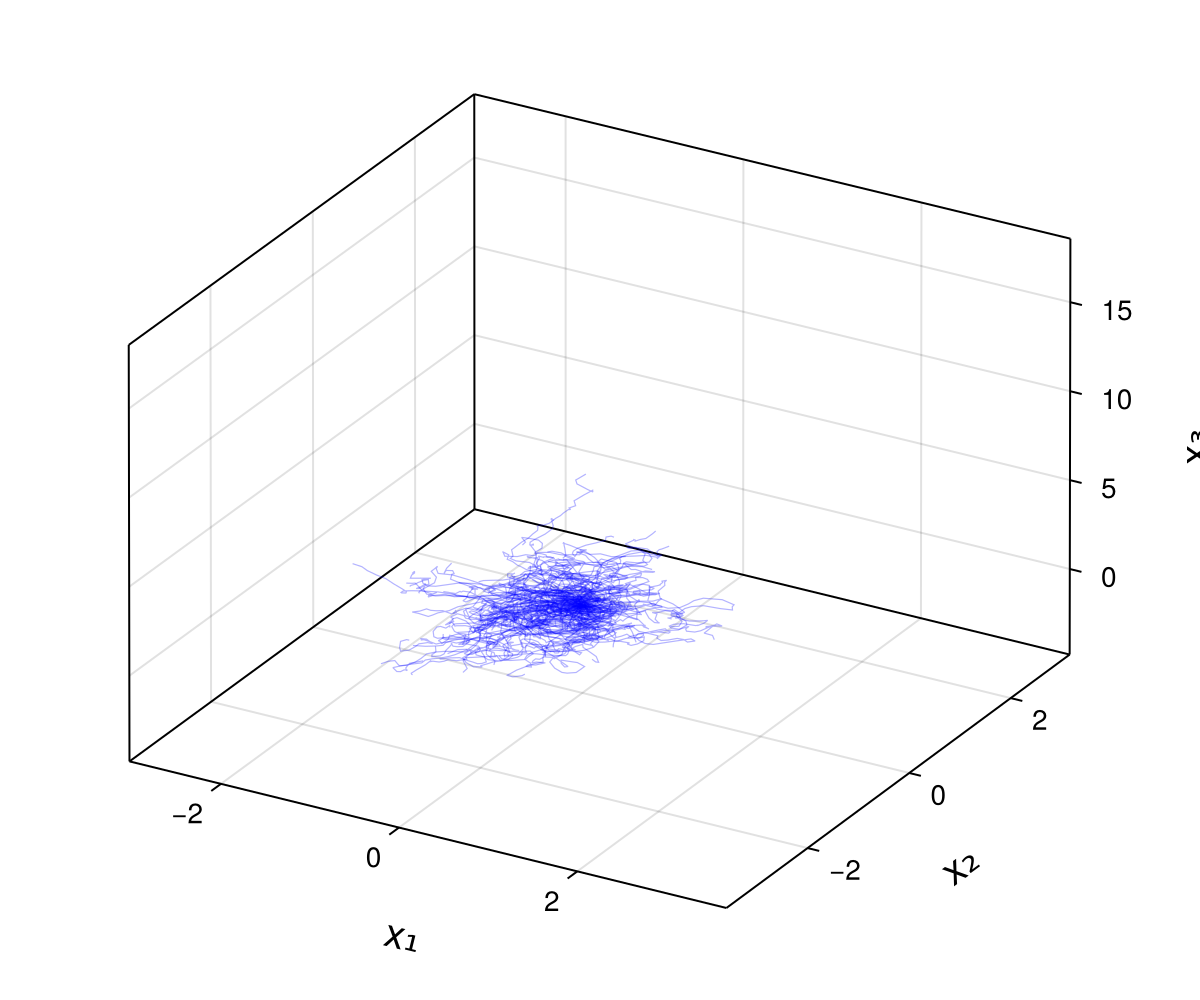}
        \caption{Euler-Maruyama.}
    \end{subfigure}
    \begin{subfigure}[t]{0.32\textwidth}
        \centering
        \includegraphics[width=0.98\linewidth]{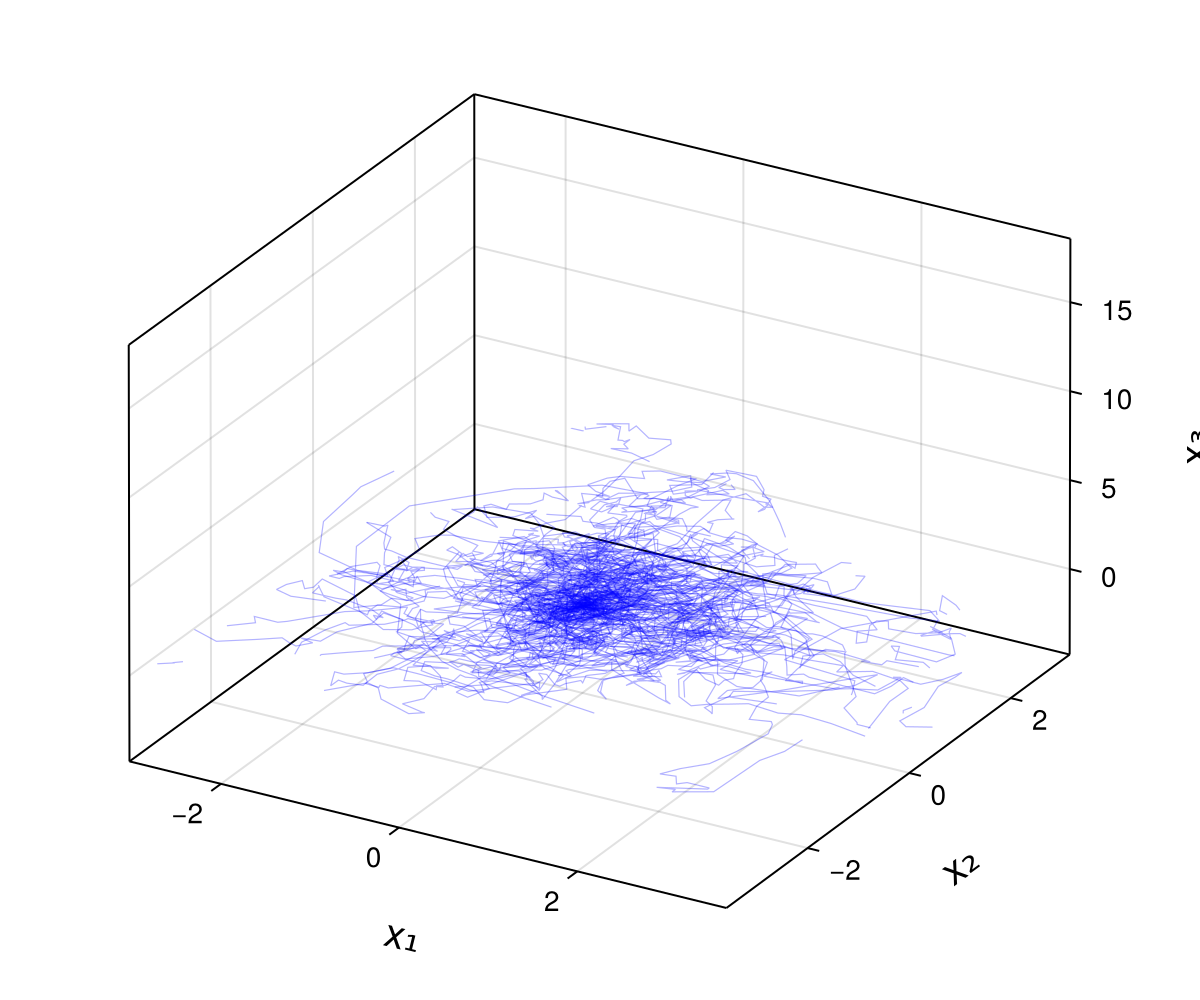}
        \caption{Kolmogorov.}
    \end{subfigure}
    \caption{Results for the rarefied coarse-graining system (positions only).}
    \label{fig:lorenz}
\end{figure}

\newpage

\bibliographystyle{unsrt}
\bibliography{bibliography}

\appendix

% ============================================================
%  NKE Paper – Full Appendix
%  Compile as a standalone supplement or \input into main.tex
% ============================================================

\appendix
\newpage 
\section{Symbols and notation}

The paper alternates between the microscopic SDE picture, given in terms of particles and their trajectories, and the macroscopic Kolmogorov picture, given in terms of densities and operators. We provie a table describing the notation and the corresponding objects across pictures.

Whenever possible, we have attempted to separate the notation as follows: 
\begin{itemize}
    \item Lower-case latin letters refer to particles and terms in the SDE picture;
    \item Calligraphic letters refer to abstract spaces and operators in the Kolmogorov picture;
    \item Greek characters refer to neural-network or Gaussian mixture parameters;
    \item Tildes refer to sample-based/empirical objects;
    \item Hats refer to projections onto the space of Gaussian Mixtures.
\end{itemize} 

\begin{table}[H]
\centering
% \small
\caption{Correspondence between the particle-level and density-level formulations.}
\begin{tabular}{p{0.42\linewidth} p{0.42\linewidth}}
\toprule
\textbf{Particle / Path Level} & \textbf{Density / Operator Level} \\
\midrule
State space $\R^d$ 
& Space of measures $\mathcal{M}(\R^d)$ \\

State trajectory $x_t$ 
& Density $p_t(x)$ \\

Discretely observed trajectory $x_n$ 
& Discretely observed density $p_{ns}(x)$ \\

Drift $f(x)$
& Advection generator $\A$ \\

Noise coefficient $g(x)$
& Diffusion generator $\D$ \\

Jump distribution  $h(x)$
& Jump generator $\J$ \\

% Drift action $ x_0 + \int_0^sf(x_\tau) \d \tau$ & 
% Advection action $e^{\A s}p_0$ \\

Particle increment $x_{n+1}-x_n$
& Infinitesimal generator action $(e^{\mathcal L s} - 1)p_{ns}$ \\

Single particle $x^{(m)}_n$
& Delta measure $\delta_{x^{(m)}_n}$ \\

Local particle cluster $\tilde{G}_n^{\{c\}}$ 
& Gaussian packet $G_{\mu,\Sigma}$ \\

Cluster mean/covariance $\tilde \mu^{\{c\}}_n,\Sigma^{\{c\}}_n$
& Gaussian parameters $\mu,\Sigma$ \\

% Euler--Maruyama step $x_{h} = x_{0} + sf(x_0) + \sqrt{s}g(x_0)\xi$
% & Lie--Trotter splitting $p_s = e^{\D s}e^{\A s}p_0$\\

% Two-sided trajectory matching $x_{s} = x_{0} + sf(x_0)/2 + + \sqrt{h}g(x_0)\xi$
% & Strang splitting $p_h = e^{\D h}e^{\A h}p_0$\\

Jump finite-time distribution $H^s$
& Jump semigroup $e^{\J s}$\\

Jump thresholding $\|x_{n+1}-x_n\|>\tau$
& L\'evy decomposition $\J_{<\varepsilon}+\J_{\ge\varepsilon}$ \\

Learned drift $f_\theta$
& Learned lifted advection generator $\widehat \A_\theta$ \\

Learned diffusion $g_\vartheta^2$
& Learned lifted diffusion generator $\widehat\D_\vartheta$ \\

Learned finite-time jump distribution $H^s_\Theta$
& Learned lifted jump semigroup $e^{\hat \J s}_\Theta$\\

\bottomrule
\end{tabular}
\end{table}

Additionally, we have the following operations:
\begin{itemize}
    \item The divergence $\nabla \cdot$ refers to the sum:
    $$
    \nabla \cdot a(x) = \sum_{i=1}^d \frac{\d}{\d x_i} a_i(x),
    $$
    where $a_i$ and $x_i$ loop over the vector positions;
    \item The operator $\nabla :$ refers to the sum:
    $$
    \nabla : a(x) = \sum_{i,j=1}^d \frac{\d^2}{\d x_i \d x_j} a_{ij}(x),
    $$
    where $a_{ij}$ loops over matrix entries;
    \item The inner product $\langle \phi, \varphi \rangle$ refers to the classical inner product in integral form:
    $$
    \langle \phi, \varphi \rangle = \int_{\R^d} \phi(x) \varphi(x) \d x;
    $$
    \item The superscript $^\star$ refers to an operator's classical adjoint $\langle \phi, \L \varphi \rangle = \langle \L^\star \phi,  \varphi \rangle$.
    
\end{itemize}

\newpage 

\section{Projection onto the Gaussian Mixture Manifold}
\label{app:projection}
% ============================================================

We study the projection operator
\[
\Pi_K :
\mathcal M(\mathbb R^d)
\to
\mathcal G_K,
\]
which maps probability densities onto the family of
$K$-component Gaussian mixtures. We characterize the geometry of this
projection, discuss uniqueness and non-uniqueness properties, and
connect them to the lifted dynamics used throughout NKEs.

% ============================================================
\subsection{The Gaussian Mixture Family}
% ============================================================

A Gaussian mixture with $K$ components takes the form
\begin{equation}
    p_\lambda(x)
    =
    \sum_{k=1}^{K}
    \pi^{(k)}
    G_{\mu^{(k)},\Sigma^{(k)}}(x),
    \qquad
    \sum_{k=1}^{K}\pi^{(k)}=1,
    \quad
    \pi^{(k)}>0,
\end{equation}
with parameters
\[
\lambda
=
\left(\pi^{k},\mu^{k},\Sigma^{k}\right)
_{k=1}^{K}.
\]
The corresponding family is
\begin{equation}
    \mathcal G_K
    :=
    \{
        p_\lambda
        :
        \lambda\in\Lambda_K
    \},
\end{equation}
where
\[
\Lambda_K
=
\Delta_{K-1}
\times
(\mathbb R^d\times\mathcal S_{++}^d)^K,
\]
with $\Delta_{K-1}$ the $K-1$ simplex and $S_{++}^d$ the space of positive definite matrices. For $K=1$, $\mathcal G_1$ forms a smooth statistical manifold under the
Fisher--Rao metric~\citep{amari2000methods}. In particular, the Fisher
information is strictly positive definite:
\begin{equation}
I(\mu,\Sigma)
=
\begin{pmatrix}
\Sigma^{-1}
&
0
\\
0
&
\tfrac12
\Sigma^{-1}\otimes\Sigma^{-1}
\end{pmatrix}.
\end{equation}

For $K>1$, the geometry becomes singular whenever:
\begin{enumerate}
    \item two Gaussian components coincide,
    \item or a mixture weight vanishes.
\end{enumerate}

Consequently, $\mathcal G_K$ becomes a stratified space rather than a
smooth manifold~\citep{ho2016singularity}.

% ============================================================
\subsection{Projection Operator}
% ============================================================

We will define $\Pi$ in terms a divergence $D$ on $\mathcal M(\mathbb R^d)$:
\begin{equation}
    \Pi_K(p)
    :=
    \arg\min_{q\in\mathcal G_K}
    D(p\|q).
\end{equation}
Choosing $D=\mathrm{KL}$ yields the population maximum-likelihood
projection:
\begin{equation}
    \Pi_K^{\mathrm{KL}}(p)
    =
    \arg\max_{q\in\mathcal G_K}
    \int
    p(x)\log q(x)\,\mathrm dx.
\end{equation}
In practice, this corresponds to Gaussian-mixture fitting through
expectation-maximization (EM)~\citep{dempster1977maximum}.

% ============================================================
\subsubsection{Uniqueness for $K=1$.}
% ============================================================

\begin{proposition}
For any density $p$ with finite second moments,
the KL projection onto $\mathcal G_1$ is uniquely given by moment
matching:
\begin{equation}
    \Pi_1^{\mathrm{KL}}(p)
    =
    G_{\mu^\ast,\Sigma^\ast},
\end{equation}
where
\begin{equation}
    \mu^\ast
    =
    \int x\,p(x)\,\mathrm dx,
    \qquad
    \Sigma^\ast
    =
    \int
    (x-\mu^\ast)(x-\mu^\ast)^\top
    p(x)\,\mathrm dx.
\end{equation}
\end{proposition}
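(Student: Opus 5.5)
We prove the proposition by writing the KL projection as a cross-entropy maximization over $(\mu,\Sigma)$ and solving it in closed form. Since $\int p\log p$ does not depend on $q$, minimizing $\mathrm{KL}(p\|G_{\mu,\Sigma})$ over $\mathcal G_1$ is equivalent to maximizing
\begin{equation*}
F(\mu,\Sigma) := \int p(x)\log G_{\mu,\Sigma}(x)\,\mathrm dx
= -\tfrac d2\log(2\pi) - \tfrac12\log|\Sigma| - \tfrac12\int (x-\mu)^\top\Sigma^{-1}(x-\mu)\,p(x)\,\mathrm dx .
\end{equation*}
Finiteness of the second moments makes $F$ finite on $\R^d\times\mathcal S^d_{++}$. (If $\int p\log p$ is infinite, we simply take the cross-entropy maximization as the definition of $\Pi_1^{\mathrm{KL}}$.)

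\textbf{Step 1: the mean.} Expand around $\mu^\ast$ using $\int (x-\mu^\ast)p = 0$. This gives
\begin{equation*}
\int (x-\mu)^\top\Sigma^{-1}(x-\mu)\,p
= \mathrm{Tr}(\Sigma^{-1}\Sigma^\ast) + (\mu-\mu^\ast)^\top\Sigma^{-1}(\mu-\mu^\ast).
\end{equation*}
For every fixed $\Sigma\succ 0$, the last term is strictly positive unless $\mu=\mu^\ast$. Hence the optimal mean is uniquely $\mu^\ast$, independently of $\Sigma$.

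\textbf{Step 2: the covariance.} Substituting $\mu=\mu^\ast$ leaves the problem of maximizing
\begin{equation*}
\Sigma \mapsto -\tfrac12\big(\log|\Sigma| + \mathrm{Tr}(\Sigma^{-1}\Sigma^\ast)\big).
\end{equation*}
Reparametrize by the precision $P=\Sigma^{-1}$, so we maximize $\log|P| - \mathrm{Tr}(P\Sigma^\ast)$. The map $P\mapsto\log|P|$ is strictly concave on $\mathcal S^d_{++}$ and the trace term is linear, so the objective is strictly concave. Its stationarity condition $P^{-1}=\Sigma^\ast$ therefore identifies the unique global maximizer, provided it lies in the domain. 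Alternatively, writing the objective in the eigenvalues $\lambda_i$ of $\Sigma^{\ast 1/2}P\,\Sigma^{\ast 1/2}$ reduces it to $\sum_i(\log\lambda_i-\lambda_i)$ plus a constant, which is maximized exactly when all $\lambda_i=1$.

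\textbf{Main obstacle: $\Sigma^\ast\succ 0$.} Both Step 2 and the existence of the maximizer need $\Sigma^\ast$ to be positive definite. Since $p$ is a density, i.e. absolutely continuous, it cannot concentrate on an affine hyperplane. So $v^\top\Sigma^\ast v=\int (v^\top(x-\mu^\ast))^2p>0$ for every $v\neq 0$. I would make this explicit, noting that for a degenerate measure (e.g.\ a Dirac mass) the supremum is $+\infty$ and is not attained. This is why the statement is restricted to densities.

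Combining the two steps gives the unique optimum $(\mu^\ast,\Sigma^\ast)$, which matches the moment-matching formula $\Pi_1$ stated in the main text.
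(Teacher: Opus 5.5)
Your proof is correct, and it reaches the result by a more explicit route than the paper's sketch. The paper argues in one step. It asserts that $\mathrm{KL}(p\|G_{\mu,\Sigma})$ is strictly convex in $(\mu,\Sigma)$ because the Fisher information on $\mathcal G_1$ is positive definite, and then reads off moment matching from the first-order conditions.

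You instead profile out the mean exactly by completing the square, so $\mu$ needs neither differentiation nor convexity. You then handle the covariance in the precision $P=\Sigma^{-1}$, where $\log|P|-\mathrm{Tr}(P\Sigma^\ast)$ is genuinely strictly concave.

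This difference is more than cosmetic. The objective is \emph{not} convex in $\Sigma$ itself: already for $d=1$, the map $v\mapsto \tfrac12\log v+\tfrac{c}{2v}$ has second derivative $-\tfrac{1}{2v^2}+\tfrac{c}{v^3}<0$ for $v>2c$. So the paper's convexity claim holds only in natural parameters, or locally near the optimum. Your reparametrization is exactly what makes the ``convexity plus stationarity'' argument deliver global uniqueness.

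You also supply two points the paper leaves implicit. First, you show $\Sigma^\ast\succ 0$ for an absolutely continuous $p$, which is needed for the stationary point to lie in $\mathcal S^d_{++}$ and explains why the statement is restricted to densities. Second, you note that the cross-entropy formulation stays meaningful when $\int p\log p=+\infty$; this is in fact how the paper itself defines $\Pi_K^{\mathrm{KL}}$.

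The paper's version buys brevity and an information-geometric reading. Yours buys a self-contained argument that is actually rigorous.
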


\begin{proof}[Proof sketch]
The KL divergence
\[
\mathrm{KL}
(
p\|G_{\mu,\Sigma}
)
\]
is strictly convex in $(\mu,\Sigma)$ because the Fisher information on
$\mathcal G_1$ is positive definite. Differentiating with respect to
$\mu$ and $\Sigma$ yields the classical moment-matching conditions, thus clearly
 unique.
\end{proof}

This uniqueness is fundamental for the advection and diffusion losses,
which only rely on single-Gaussian projections.

\subsection{Non-Uniqueness for $K>1$}

For $K>1$, the projection is generally not unique at the level of mixture parameters. This non-uniqueness is intrinsic to the Gaussian-mixture representation itself and does not depend on the particular optimization algorithm used to compute the projection.

\begin{proposition}[Non-uniqueness of Gaussian-mixture projections]
Let $K>1$ and define the projection
\[
\Pi_K(p)
:=
\arg\min_{q\in\mathcal G_K}
D(p\|q),
\]
where $D$ is any divergence that depends only on the represented density $q$.

Then the projection is generally non-unique in parameter space. In particular:

\begin{enumerate}
\item \textbf{Permutation symmetry.}
If $\lambda^\star$ is an optimal set of mixture parameters, then any permutation of its Gaussian components yields the same density and is therefore also optimal.

\item \textbf{Redundant representations.}
If $p\in\mathcal G_{K^\star}$ for some $K^\star < K$, then infinitely many distinct parameter vectors in $\mathcal G_K$ represent the same density and attain the same projection error.
\end{enumerate}
\end{proposition}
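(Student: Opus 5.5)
The whole argument rests on one observation: the objective $\lambda \mapsto D(p\|p_\lambda)$ factors through the parametrization map $\Phi:\Lambda_K\to\mathcal G_K$, $\lambda\mapsto p_\lambda$. This holds by the assumption that $D$ depends only on the represented density. Consequently, if $\lambda^\star$ is a minimizer and $\lambda'$ satisfies $p_{\lambda'}=p_{\lambda^\star}$, then $D(p\|p_{\lambda'})=D(p\|p_{\lambda^\star})$, so $\lambda'$ is also a minimizer. Non-uniqueness of the projection in parameter space therefore reduces to non-injectivity of $\Phi$, and it suffices to exhibit distinct parameter vectors mapping to the same density.

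\textbf{Permutation symmetry.} Let $\lambda^\star=(\pi^{(k)},\mu^{(k)},\Sigma^{(k)})_{k=1}^K$ and let $\sigma$ be any permutation of $\{1,\dots,K\}$. Define $\lambda^\sigma=(\pi^{(\sigma(k))},\mu^{(\sigma(k))},\Sigma^{(\sigma(k))})_k$. Because the mixture is a finite sum, commutativity gives $p_{\lambda^\sigma}=p_{\lambda^\star}$, and $\lambda^\sigma\in\Lambda_K$ since the simplex and product structure are permutation invariant. The vectors $\lambda^\sigma$ and $\lambda^\star$ differ whenever $\sigma$ moves two components with distinct parameter triples. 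This is the generic situation; if all components coincide we are already in the redundant case treated next.

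\textbf{Redundant representations.} Suppose $p\in\mathcal G_{K^\star}$ with $K^\star<K$, say $p=\sum_{k=1}^{K^\star}\pi^{(k)}G_{\mu^{(k)},\Sigma^{(k)}}$. Then the minimum over $\mathcal G_K$ is attained at $q=p$, with error $D(p\|p)$; this is the minimal possible value, since $D$ is a divergence. To produce infinitely many parameter vectors representing $p$ within $\Lambda_K$, where weights must be strictly positive, I split a component. Write $\pi^{(1)}G_{\mu^{(1)},\Sigma^{(1)}}=t\pi^{(1)}G_{\mu^{(1)},\Sigma^{(1)}}+(1-t)\pi^{(1)}G_{\mu^{(1)},\Sigma^{(1)}}$ for any $t\in(0,1)$, and distribute the remaining $K-K^\star-1$ extra slots by further splitting in the same way. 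Each value of $t$ gives a distinct $\lambda_t\in\Lambda_K$ with $p_{\lambda_t}=p$, hence a continuum of minimizers.

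No step here is a serious obstacle. The only points requiring care are the hypotheses implicit in the statement. First, the strict positivity of weights in $\Lambda_K$ forces splitting rather than zero-weight padding. Second, the phrase ``generally non-unique'' should be read as holding for generic optimal $\lambda^\star$, meaning those with at least two distinct components, in the permutation case.
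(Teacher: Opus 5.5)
Your proposal is correct and follows the paper's own argument. Relabeling components leaves the finite mixture sum, and hence the objective, unchanged. Splitting $\pi G_{\mu,\Sigma}=t\pi G_{\mu,\Sigma}+(1-t)\pi G_{\mu,\Sigma}$ gives a continuum of distinct parameter vectors for the same density. Factoring the objective through $\lambda\mapsto p_\lambda$ and handling the strictly positive weights only make explicit what the paper's sketch leaves implicit. One small nit: when all components of $\lambda^\star$ coincide, the redundancy sits in the minimizer $p_{\lambda^\star}$ rather than in $p$, but the same reweighting argument applies there.
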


\begin{proof}[Proof sketch]
Permuting Gaussian components leaves the mixture density unchanged. Therefore, if
\[
p_{\lambda^\star}
=
\sum_{k=1}^{K}
\pi^{(k)}
G_{\mu^{(k)},\Sigma^{(k)}},
\]
is optimal, then any relabeling of the components produces the same density and achieves the same objective value. For the second claim, suppose a component
\[
\pi G_{\mu,\Sigma}
\]
appears in the mixture. For any $\alpha\in(0,1)$,
\[
\pi G_{\mu,\Sigma}
=
\alpha\pi G_{\mu,\Sigma}
+
(1-\alpha)\pi G_{\mu,\Sigma}.
\]
Thus a single component may be replaced by two identical components whose weights sum to the original weight without changing the represented density. Repeating this construction produces infinitely many distinct parameter vectors corresponding to the same density.
\end{proof}

The non-uniqueness described above is structural and reflects the non-identifiability of Gaussian-mixture parameterizations. Additional optimization-related difficulties may arise because the projection objective is generally non-convex and may possess singular or degenerate stationary points. 
% These phenomena are distinct from the intrinsic non-uniqueness of the representation itself.

\subsubsection{Practical Projection in NKEs}
% ============================================================

The non-uniqueness of these projection will only be an issue when estimating state-depedent jump distributions, as in this case the non-uniqueness of the projection may lead to discontinuity. NKEs use two distinct projection mechanisms.

\paragraph{Advection--diffusion projection ($K=1$).}
Empirical clusters
$\widetilde G_n^{\{c\}}$
are projected through moment matching onto Gaussian packets
\[
G_{
\widetilde\mu_n^{\{c\}},
\widetilde\Sigma_n^{\{c\}}
}.
\]

This projection is unique and fully explicit.

\paragraph{Jump projection ($K\ge1$).}
Finite-time jump laws
$H^s(\cdot|x)$
are approximated through conditional Gaussian mixtures
parameterized by
$H_\Theta^s(\cdot|x)$. We expect the network's smoothness to act as implicit regularization. To stabilize training, we furthermore:
\begin{enumerate}
    \item use small mixture counts,
    \item regularize covariances,
    \item prune low-weight components,
    \item enforce spatial smoothness.
\end{enumerate}

\section{Lifted Dynamics}

This appendix provides theoretical justification for the approximations underlying Neural Kolmogorov Equations. We show that:
\begin{enumerate}
    \item Localized Gaussian packets remain approximately Gaussian under short-time advection--diffusion evolution;
    \item Sufficiently small jumps behave as an effective diffusion process;
    \item State-dependent jump distributions may be frozen at the packet center with controlled error.
\end{enumerate}

Throughout this section, we assume

\[
f\in C_b^3(\mathbb R^d,\mathbb R^d),
\qquad
a:=gg^\top\in C_b^2(\mathbb R^d,\mathbb R^{d\times d}),
\]

and that all jump measures possess finite third moments. We denote by

\[
\mathcal L=\mathcal A+\mathcal D+\mathcal J
\]

the Kolmogorov generator,

\[
\mathcal A p
=
-\nabla\cdot(fp),
\qquad
\mathcal D p
=
\frac12\nabla^2:(ap),
\]
% and
\[
\mathcal J p(x)
=
\int_{\mathbb R^d}
% \bigl[
p(x-y)h(x-y,y)-p(x)
% \bigr]
h(x,y) \, \d y.
\]

\subsection{Short-time Gaussianity}

We now justify the Gaussian lifting procedure used throughout the paper, in particular the claim that Gaussianity is preserved to second order in time. 
Let
\[
p_0
=
G_{\mu,\Sigma}
\]
be a Gaussian packet and define
\[
p_s
=
e^{(\mathcal A+\mathcal D)s}p_0.
\]
The projected mean and covariance increments are
\[
\Delta\mu
=
\langle f,p_0\rangle,
\]
and
\[
\Delta\Sigma
=
\Big\langle
(x-\mu)f^\top
+
f(x-\mu)^\top
+
a,
p_0
\Big\rangle.
\]

Let

\[
q_s
=
G_{\mu+s\Delta\mu,
\Sigma+s\Delta\Sigma}
\]

denote the Gaussian obtained by evolving only the first two moments.

\begin{lemma}[Growth of higher cumulants]
Let $p_t = e^{(A+D)t}G_{\mu,\Sigma}$, with
\[
Ap = -\nabla\cdot(fp),
\qquad
Dp = \frac12 \nabla^2 : (ap),
\qquad
a := gg^\top .
\]
Assume $f\in C_b^3(\mathbb R^d,\mathbb R^d)$ and
$a\in C_b^2(\mathbb R^d,\mathbb R^{d\times d})$. Let $\kappa_\alpha(t)$ denote the cumulant of $p_t$ associated with a multi-index $\alpha$ with $|\alpha|\geq 3$. Then, for sufficiently small $t$,
\[
\kappa_\alpha(t) = O(t).
\]
\end{lemma}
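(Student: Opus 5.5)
The plan is to realize $p_t$ as the law of the diffusion $X_t$ solving $\mathrm d X_t=f(X_t)\,\mathrm dt+g(X_t)\,\mathrm dw_t$ with $X_0\sim G_{\mu,\Sigma}$. The lemma then reduces to two facts: the cumulants of order $\geq 3$ vanish at $t=0$, and they are Lipschitz in $t$ on a short interval. The first fact holds because $X_0$ is Gaussian, so $\kappa_\alpha(0)=0$ for every multi-index with $|\alpha|\geq 3$. It therefore suffices to show that $t\mapsto\kappa_\alpha(t)$ is $C^1$ on some $[0,T]$ with a bounded derivative. The mean value theorem then gives $|\kappa_\alpha(t)|\leq t\sup_{[0,T]}|\dot\kappa_\alpha|=O(t)$.

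\emph{Step 1 (moment equations).} For a polynomial test function $\phi$ I would use the adjoint computed in the main text, $\langle\phi,(\mathcal A+\mathcal D)p_t\rangle=\langle(\mathcal A+\mathcal D)^\star\phi,p_t\rangle$ with $(\mathcal A+\mathcal D)^\star\phi=f\cdot\nabla\phi+\tfrac12\nabla^2\phi:a$. This gives
\[
\frac{\mathrm d}{\mathrm dt}\langle x^\beta,p_t\rangle=\big\langle f\cdot\nabla x^\beta+\tfrac12 a:\nabla^2x^\beta,\;p_t\big\rangle .
\]
Equivalently, this is It\^o's formula applied to $X_t^\beta$. Since $f$ and $a$ are bounded ($C_b^3$, $C_b^2$), the right-hand side is bounded by $C(1+\langle|x|^{|\beta|-1},p_t\rangle)$, a moment of strictly lower degree.

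\emph{Step 2 (uniform moment bounds).} By induction on $|\beta|$, using the inequality from Step 1 together with Gronwall's lemma, I get $\sup_{t\in[0,T]}\langle|x|^{n},p_t\rangle<\infty$ for every $n$. The base case is that Gaussian initial data have all moments. The same inequality then shows that each moment $m_\beta(t)$ is $C^1$ on $[0,T]$ with $|\dot m_\beta|\leq C_\beta$.

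\emph{Step 3 (cumulants).} The cumulant $\kappa_\alpha(t)$ is a fixed polynomial, through the moment--cumulant formula, in the moments $m_\beta(t)$ with $|\beta|\leq|\alpha|$. By the chain rule and Step 2, $\kappa_\alpha$ is $C^1$ on $[0,T]$ with $\sup|\dot\kappa_\alpha|\leq C_\alpha$. Here $C_\alpha$ depends on $\|f\|_{C_b}$, $\|a\|_{C_b}$, $\mu$, $\Sigma$ and $T$. Combined with $\kappa_\alpha(0)=0$, this yields the claim. As a sanity check, a short computation at $t=0$ shows $\dot\kappa_\alpha(0)$ is generally nonzero; for example, the third cumulant picks up $\langle f\otimes(x-\mu)^{\otimes 2},G_{\mu,\Sigma}\rangle$-type terms when $f$ is nonlinear. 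So $O(t)$ is the correct order and cannot be improved in general.

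The main obstacle is the rigorous justification of Step 1. The test functions $x^\beta$ are unbounded, so neither the integration by parts against $p_t$ nor It\^o's formula can be applied directly. I would handle this by localization. First apply It\^o's formula up to the exit time $\tau_R$ of a ball of radius $R$, so that the stochastic integral is a true martingale. Then bound the stopped moments uniformly in $R$ via Gronwall, using that $f,g$ are bounded and Lipschitz (assumptions (A1)--(A2)). Finally let $R\to\infty$ by monotone and dominated convergence. A cleaner alternative is to use the exponential-moment bound $\mathbb E\,e^{\epsilon|X_t|}<\infty$ for bounded coefficients, which makes the cumulant generating function analytic near $0$ uniformly in $t\in[0,T]$. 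With that bound, differentiating under the integral is immediate.
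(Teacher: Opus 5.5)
Your proposal is correct and follows essentially the same route as the paper. Both arguments take weak-form moment equations from the adjoint $f\cdot\nabla\phi+\tfrac12 a:\nabla^2\phi$, conclude $m_\beta(t)=m_\beta(0)+O(t)$, write the cumulants as polynomials in the moments, and use that Gaussian cumulants of order $\geq 3$ vanish. Your induction/Gronwall moment bounds and the localization argument for the unbounded test functions spell out what the paper only calls ``standard moment estimates''.
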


\begin{proof}[Proof sketch]
We work entirely in weak form. For any smooth test function $\varphi$,
\[
\frac{d}{dt}\langle \varphi,p_t\rangle
=
\langle \varphi,(\A+\D)p_t\rangle
=
\langle (\A^\star+\D^\star)\varphi,p_t\rangle ,
\]
where
\[
\A^\star \varphi = f\cdot\nabla\varphi,
\qquad
\D^\star \varphi = \frac12 a:\nabla^2\varphi .
\]
Let
\[
m_\alpha(t):=\langle x^\alpha,p_t\rangle
\]
be the raw moment associated with the multi-index $\alpha$. Taking
$\varphi(x)=x^\alpha$ gives
\[
\frac{d}{dt}m_\alpha(t)
=
\left\langle
f\cdot\nabla x^\alpha
+
\frac12 a:\nabla^2 x^\alpha,
p_t
\right\rangle .
\]
Since $x^\alpha$ is a polynomial, $\nabla x^\alpha$ and $\nabla^2 x^\alpha$ are polynomials of degrees
$|\alpha|-1$ and $|\alpha|-2$, respectively. Under the assumed boundedness of $f$ and $a$, the right-hand side is bounded on a short time interval by a constant depending only on finitely many moments of $p_t$. Since $p_0=G_{\mu,\Sigma}$ has finite moments of all orders, standard moment estimates imply that
\[
\frac{d}{dt}m_\alpha(t)=O(1),
\]
and therefore
\[
m_\alpha(t)=m_\alpha(0)+O(t).
\]

Now let $\kappa_\alpha(t)$ be the corresponding cumulant. Cumulants are polynomial functions of raw moments up to order $|\alpha|$:
\[
\kappa_\alpha(t)
=
P_\alpha\bigl(\{m_\beta(t):|\beta|\leq |\alpha|\}\bigr),
\]
for a universal polynomial $P_\alpha$. Since each raw moment satisfies
\[
m_\beta(t)=m_\beta(0)+O(t),
\]
it follows that
\[
\kappa_\alpha(t)
=
\kappa_\alpha(0)+O(t).
\]
Because the initial packet $p_0=G_{\mu,\Sigma}$ is Gaussian, all cumulants of order $|\alpha|\geq 3$ vanish:
\[
\kappa_\alpha(0)=0.
\]
Hence
\[
\kappa_\alpha(t)=O(t),
\]
as claimed.
\end{proof}

Therefore a localized Gaussian packet remains approximately Gaussian
under short-time advection--diffusion evolution, with second-order
error in relative entropy.

% See \cite{hall1992bootstrap,pavliotis2014stochastic}.

% \newpage 

\subsection{Small-jump approximation}

We now justify the approximation of sufficiently small jumps by an effective advection--diffusion process. For a state-dependent jump kernel $h(x,dy)$, the backward (adjoint) jump generator acting on test functions is:
\begin{equation}
\mathcal J^\star \phi(x)
=
\int_{\mathbb R^d}
\bigl[
\phi(x+y)-\phi(x)
\bigr]
\,h(x,dy).
\end{equation}

Equivalently, $\mathcal J$ admits the gain--loss representation

\begin{equation}
(\mathcal Jp)(x)
=
\int_{\mathbb R^d}
p(x-y)\,
h(x-y,dy)
-
p(x)
\int_{\mathbb R^d}
h(x,dy).
\end{equation}

The first term represents mass arriving at $x$ from points $x-y$, while the second represents mass leaving $x$ through jumps to other locations. Restricting attention to jumps smaller than a threshold $\varepsilon>0$, define
\begin{equation}
\mathcal J^\star_{<\varepsilon}\phi(x)
=
\int_{|y|<\varepsilon}
\bigl[
\phi(x+y)-\phi(x)
\bigr]
\,h(x,dy).
\end{equation}

\begin{proposition}[Small-jump diffusion approximation]
Suppose $\phi\in C^3(\mathbb R^d)$. Define
\begin{equation}
\beta(x)
=
\int_{|y|<\varepsilon}
y\,h(x,dy),
\end{equation}
and
\begin{equation}
\gamma(x)
=
\int_{|y|<\varepsilon}
yy^\top h(x,dy).
\end{equation}
Then

\begin{equation}
\mathcal J^\star_{<\varepsilon}\phi
=
\beta\cdot\nabla\phi
+
\frac12
\Gamma:\nabla^2\phi
+
\mathcal R^\star_\varepsilon\phi,
\end{equation}

where

\begin{equation}
|\mathcal R^\star_\varepsilon\phi(x)|
\le
C
\|\nabla^3\phi\|_\infty
\int_{|y|<\varepsilon}
|y|^3 h(x,dy).
\end{equation}
\end{proposition}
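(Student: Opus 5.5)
The plan is a pointwise second-order Taylor expansion of $\phi$ about $x$, integrated against the truncated jump measure $h(x,dy)\,\mathbf 1_{\{|y|<\varepsilon\}}$. In the statement, $\Gamma$ is read as $\gamma$. Fix $x\in\mathbb R^d$. For each $y$ with $|y|<\varepsilon$, Taylor's theorem with integral (or Lagrange) remainder along the segment $\{x+ty: t\in[0,1]\}$ gives
\begin{equation*}
\phi(x+y)-\phi(x) = y\cdot\nabla\phi(x) + \tfrac12\, yy^\top : \nabla^2\phi(x) + r(x,y),
\end{equation*}
with $r(x,y)=\tfrac12\int_0^1 (1-t)^2\,\nabla^3\phi(x+ty)[y,y,y]\,dt$. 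Bounding the trilinear form by the operator norm gives
\begin{equation*}
|r(x,y)|\le \tfrac16\,\|\nabla^3\phi\|_\infty\,|y|^3 .
\end{equation*}

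Next I integrate this identity against $h(x,dy)$ over $\{|y|<\varepsilon\}$. The constant factors $\nabla\phi(x)$ and $\nabla^2\phi(x)$ come out of the integral. The linear term produces $\beta(x)\cdot\nabla\phi(x)$, and the quadratic term produces $\tfrac12\gamma(x):\nabla^2\phi(x)$, using linearity of the Frobenius pairing. The remainder is then defined as
\begin{equation*}
\mathcal R^\star_\varepsilon\phi(x):=\int_{|y|<\varepsilon} r(x,y)\,h(x,dy).
\end{equation*}
Pulling the absolute value inside the integral gives the stated bound with $C=1/6$.

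The main point requiring care is that every integral is well-defined. The measure $h(x,\cdot)$ may have infinite total mass near the origin (a Lévy measure), so $\int_{|y|<\varepsilon}[\phi(x+y)-\phi(x)]\,h(x,dy)$ need not split termwise, and $\beta$ need not even exist. I will assume what the statement implicitly uses: that $\int_{|y|<\varepsilon}|y|\,h(x,dy)<\infty$, which holds, for instance, under the standing finite-moment assumption (A3) together with the third-moment assumption of this appendix, applied to a finite measure. Since $|y|^2,|y|^3\le \varepsilon|y|$ on the ball, this single condition makes $\beta$, $\gamma$ and the remainder integral absolutely convergent, and justifies splitting the integral. If $h$ is merely Lévy, I would instead note that the compensated form (subtracting $y\cdot\nabla\phi$) is the natural definition, and that the same expansion then holds without the $\beta$ term.

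A secondary issue is measurability of $y\mapsto r(x,y)$. This is immediate, since $r(x,y)$ is a difference of continuous functions of $y$.

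A remark on what the proposition buys: if $\phi$ has bounded third derivatives, then the remainder is $O(\varepsilon)\cdot\int_{|y|<\varepsilon}|y|^2\,h(x,dy)$, since $|y|^3\le\varepsilon|y|^2$ on the ball. It is therefore negligible relative to the diffusion term as $\varepsilon\to0$. Dualizing via $\langle\mathcal J_{<\varepsilon}p,\phi\rangle=\langle p,\mathcal J^\star_{<\varepsilon}\phi\rangle$ and integrating by parts then recovers the forward form in eq.~\eqref{eq:small-jump-taylor}. That duality step is not required for the proposition itself.
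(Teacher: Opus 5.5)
Your proposal is correct and is essentially the paper's argument: a second-order Taylor expansion of $\phi$ about $x$ with a cubic remainder, integrated term-by-term against $h(x,dy)$ over $\{|y|<\varepsilon\}$. Your explicit constant $C=1/6$, the reading $\Gamma=\gamma$, and the integrability caveat (finite $\int_{|y|<\varepsilon}|y|\,h(x,dy)$, or compensation for a genuine L\'evy measure) are details the paper leaves implicit; one harmless slip is that on the ball you have $|y|^3\le\varepsilon^2|y|$, not $\varepsilon|y|$ unless $\varepsilon\le 1$, which does not affect the conclusion.
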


\begin{proof}
Applying Taylor's theorem,

\begin{equation}
\phi(x+y)
=
\phi(x)
+
y^\top\nabla\phi(x)
+
\frac12
y^\top\nabla^2\phi(x)y
+
R_3(x,y),
\end{equation}
with
\begin{equation}
|R_3(x,y)|
\le
C|y|^3
\|\nabla^3\phi\|_\infty.
\end{equation}
Substituting into $\mathcal J^\star_{<\varepsilon}$ and integrating term-by-term yields

\begin{equation}
\mathcal J^\star_{<\varepsilon}\phi
=
\beta\cdot\nabla\phi
+
\frac12
\gamma:\nabla^2\phi
+
\mathcal R^\star_\varepsilon\phi.
\end{equation}

The remainder estimate follows immediately.
\end{proof}

Passing to the forward operator by duality we get the usual advection-diffusion form:
\begin{equation}
\mathcal J_{<\varepsilon}p
=
-\nabla\cdot(\beta p)
+
\frac12
\nabla^2:(\gamma p)
% + \mathcal R_\varepsilon p,
\end{equation}

% where the remainder obeys the weak estimate

% \begin{equation}
% \left|
% \int
% \phi\,
% \mathcal R_\varepsilon p
% \right|
% \le
% C
% \|\nabla^3\phi\|_\infty
% \int
% p(x)
% \left[
% \int_{|y|<\varepsilon}
% |y|^3 h(x,dy)
% \right]
% dx.
% \end{equation}

Consequently, sufficiently small jumps behave, to second order in jump size, as an effective advection--diffusion process with drift coefficient $\beta$ and diffusion tensor $\Gamma$. This recovers the classical diffusion approximation underlying the Lévy--Khintchine representation and the construction of Lévy-type diffusions.

\subsection{Jump freezing} 

\begin{proposition}[Freezing the finite-time jump operator]
Fix \(s>0\). Let \(H_s(x,y)\) denote the density of the finite-time jump
increment \(y\in\mathbb{R}^d\) associated with a particle located at
\(x\in\mathbb{R}^d\). Define the state-dependent jump operator
\[
(\mathcal{H}_s p)(z)
:=
\int_{\mathbb{R}^d}
H_s(x,z-x)p(x)\,\mathrm{d}x,
\]
and the operator obtained by freezing the jump law at \(\mu\),
\[
(\mathcal{H}_s^\mu p)(z)
:=
\int_{\mathbb{R}^d}
H_s(\mu,z-x)p(x)\,\mathrm{d}x
=
\bigl(H_s(\mu,\cdot)*p\bigr)(z).
\]

Assume that \(H_s\) is Lipschitz continuous in its spatial argument in
\(L^1\), namely that there exists \(L_s>0\) such that
\[
\int_{\mathbb{R}^d}
\left|
H_s(x,y)-H_s(x',y)
\right|
\,\mathrm{d}y
\leq
L_s\|x-x'\|
\]
for every \(x,x'\in\mathbb{R}^d\). Then
\[
\left\|
\mathcal{H}_sG_{\mu,\Sigma}
-
\mathcal{H}_s^\mu G_{\mu,\Sigma}
\right\|_{L^1}
\leq
L_s\sqrt{\operatorname{tr}\Sigma}.
\]
Consequently,
\[
\mathcal{H}_sG_{\mu,\Sigma}
-
H_s(\mu,\cdot)*G_{\mu,\Sigma}
\longrightarrow 0
\qquad\text{in }L^1(\mathbb{R}^d)
\]
as \(\operatorname{tr}\Sigma\to0\).
\end{proposition}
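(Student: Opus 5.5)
We bound the $L^1$ norm of the difference pointwise in $z$ and then exchange the order of integration. Writing $p=G_{\mu,\Sigma}$, the definitions give
\[
(\mathcal{H}_s p)(z)-(\mathcal{H}_s^\mu p)(z)
=
\int_{\mathbb{R}^d}\bigl[H_s(x,z-x)-H_s(\mu,z-x)\bigr]p(x)\,\mathrm{d}x .
\]
We take absolute values inside the $x$-integral and integrate over $z$. Since the integrand is nonnegative, Tonelli's theorem lets us swap the integrals:
\[
\bigl\|\mathcal{H}_sp-\mathcal{H}_s^\mu p\bigr\|_{L^1}
\le
\int_{\mathbb{R}^d}p(x)\left(\int_{\mathbb{R}^d}\bigl|H_s(x,z-x)-H_s(\mu,z-x)\bigr|\,\mathrm{d}z\right)\mathrm{d}x .
\]
For fixed $x$, the inner integral becomes, after the translation $y=z-x$, exactly the $L^1$ modulus in the hypothesis. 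It is therefore bounded by $L_s\|x-\mu\|$.

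This reduces the problem to the first absolute moment of the Gaussian, $\int\|x-\mu\|\,G_{\mu,\Sigma}(x)\,\mathrm{d}x=\mathbb{E}\|X-\mu\|$ with $X\sim G_{\mu,\Sigma}$. We bound it by Jensen's (or Cauchy--Schwarz) inequality:
\[
\mathbb{E}\|X-\mu\|\le\bigl(\mathbb{E}\|X-\mu\|^2\bigr)^{1/2}=\sqrt{\operatorname{tr}\Sigma}.
\]
This yields $\|\mathcal{H}_sG_{\mu,\Sigma}-\mathcal{H}^\mu_sG_{\mu,\Sigma}\|_{L^1}\le L_s\sqrt{\operatorname{tr}\Sigma}$. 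The convergence statement follows immediately, since $L_s$ does not depend on $\Sigma$. The identification $\mathcal{H}_s^\mu p=H_s(\mu,\cdot)*p$ is just the definition of convolution.

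We expect no real obstacle. The only step needing care is justifying the exchange of integrals and the measurability of $(x,z)\mapsto H_s(x,z-x)$, which holds because $H_s$ is a jointly measurable density. Tonelli applies without any integrability assumption since we work with absolute values. If $H_s(x,\cdot)$ were only a measure rather than a density, the same argument would go through with total variation in place of the $L^1$ norm.
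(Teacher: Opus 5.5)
Your proposal is correct and follows the paper's proof step for step. Both use the triangle inequality with an exchange of integration order, the translation $y=z-x$ to invoke the $L^1$-Lipschitz hypothesis, and Cauchy--Schwarz (using unit mass) to bound the first absolute moment by $\sqrt{\operatorname{tr}\Sigma}$; your remarks on Tonelli and joint measurability just make explicit what the paper leaves implicit.
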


\begin{proof}
By the definitions of \(\mathcal{H}_s\) and
\(\mathcal{H}_s^\mu\),
\[
\begin{aligned}
&
\left\|
\mathcal{H}_sG_{\mu,\Sigma}
-
\mathcal{H}_s^\mu G_{\mu,\Sigma}
\right\|_{L^1}
\\
&=
\int_{\mathbb{R}^d}
\left|
\int_{\mathbb{R}^d}
\left[
H_s(x,z-x)-H_s(\mu,z-x)
\right]
G_{\mu,\Sigma}(x)
\,\mathrm{d}x
\right|
\,\mathrm{d}z.
\end{aligned}
\]
Using the triangle inequality and exchanging the order of integration,
\[
\begin{aligned}
&
\left\|
\mathcal{H}_sG_{\mu,\Sigma}
-
\mathcal{H}_s^\mu G_{\mu,\Sigma}
\right\|_{L^1}
\\
&\leq
\int_{\mathbb{R}^d}
G_{\mu,\Sigma}(x)
\int_{\mathbb{R}^d}
\left|
H_s(x,z-x)-H_s(\mu,z-x)
\right|
\,\mathrm{d}z\,\mathrm{d}x.
\end{aligned}
\]
Under the change of variables \(y=z-x\), this becomes
\[
\begin{aligned}
&
\left\|
\mathcal{H}_sG_{\mu,\Sigma}
-
\mathcal{H}_s^\mu G_{\mu,\Sigma}
\right\|_{L^1}
\\
&\leq
\int_{\mathbb{R}^d}
G_{\mu,\Sigma}(x)
\int_{\mathbb{R}^d}
\left|
H_s(x,y)-H_s(\mu,y)
\right|
\,\mathrm{d}y\,\mathrm{d}x.
\end{aligned}
\]
The Lipschitz assumption therefore gives
\[
\left\|
\mathcal{H}_sG_{\mu,\Sigma}
-
\mathcal{H}_s^\mu G_{\mu,\Sigma}
\right\|_{L^1}
\leq
L_s
\int_{\mathbb{R}^d}
\|x-\mu\|G_{\mu,\Sigma}(x)\,\mathrm{d}x.
\]
By the Cauchy--Schwarz inequality,
\[
\begin{aligned}
\int_{\mathbb{R}^d}
\|x-\mu\|G_{\mu,\Sigma}(x)\,\mathrm{d}x
&\leq
\left(
\int_{\mathbb{R}^d}
\|x-\mu\|^2G_{\mu,\Sigma}(x)\,\mathrm{d}x
\right)^{1/2}
\\
&\qquad\times
\left(
\int_{\mathbb{R}^d}
G_{\mu,\Sigma}(x)\,\mathrm{d}x
\right)^{1/2}.
\end{aligned}
\]
Since \(G_{\mu,\Sigma}\) has unit mass and
\[
\int_{\mathbb{R}^d}
\|x-\mu\|^2G_{\mu,\Sigma}(x)\,\mathrm{d}x
=
\operatorname{tr}\Sigma,
\]
we obtain
\[
\left\|
\mathcal{H}_sG_{\mu,\Sigma}
-
\mathcal{H}_s^\mu G_{\mu,\Sigma}
\right\|_{L^1}
\leq
L_s\sqrt{\operatorname{tr}\Sigma}.
\]
\end{proof}

%============================================================

\section{Experimental setup}

Across all experiments we test the ability of neural networks to recover the
deterministic and stochastic components of the Lévy-It\^o SDE given in \ref{eq:sde}.
Every experiment follows the
same pipeline: we generate a training and a test ensemble of the reference
process on a fine time grid (Euler--Maruyama integration with step
$\mathrm{d}t$, states saved every $\Delta t$); subsample in time; then apply a $z$-score normalization
$\tilde X = (X-\mu)/\sigma$. Unless noted otherwise we use
$N_{\text{train}}=1024$ and $N_{\text{test}}=256$ trajectories, integration step
$\mathrm{d}t=10^{-3}$, and each model is trained across $5$ random seeds.

\subsection{Lorenz system}
This benchmark probes chaotic, strongly nonlinear drift under multiplicative isotropic
noise. The state $X=(x,y,z)\in\mathbb{R}^3$ evolves as
\begin{equation}
  \dot x = \sigma_L(y-x),\qquad
  \dot y = x(\rho-z)-y,\qquad
  \dot z = xy-\beta z,
  \qquad g = \sigma \operatorname{diag}([x,y,z]),
\end{equation}
with the classical parameters $(\sigma_L,\rho,\beta)=(10,28,\tfrac{8}{3})$ and
noise scale $\sigma=3$. Trajectories are simulated on $t\in[0,2]$ with save
interval $\Delta t = 0.02$ from standard-normal initial conditions
$X_0\sim\mathcal{N}(0,\mathbb{I}_3)$.

\subsection{Multivariate Black--Scholes}
This benchmark exercises state-dependent, correlated (multiplicative) noise for
$n=3$ coupled assets $S\in\mathbb{R}^3_{>0}$:
\begin{equation}
  \mathrm{d}S_i = \mu_i S_i\,\mathrm{d}t
  + S_i\sum_{j} \big(\sigma_i L_{ij}\big)\,\mathrm{d}W_j,
  \qquad
  a_{ij}(S)=\sigma_i\,\rho_{ij}\,\sigma_j\,S_i S_j ,
\end{equation}
where $a=gg^\top$ is the state-dependent diffusion matrix and $L$ is the
Cholesky factor of the asset correlation matrix $\rho$. We use drift rates
$\mu=(0.05,0.06,0.04)$, volatilities $\sigma=(0.20,0.15,0.25)$, and correlations
$\rho_{12}=\rho_{23}=0.66$, $\rho_{13}=0.33$. Log-normal initial prices
$S_0 = \exp(0.1\,\mathcal{N}(0,\mathbb{I}_3))$ are integrated on $t\in[0,2]$ with
$\Delta t=0.02$; learning is performed in raw (unnormalized) coordinates. 

\subsection{Three-dimensional double-well jump-diffusion}

As a test problem combining metastability with discontinuous forcing, we consider a
three-dimensional gradient system whose first coordinate is bistable and whose remaining
coordinates are linearly confined. The dynamics follow the jump-diffusion stochastic
differential equation
\begin{equation}
  \mathrm{d}X_t \;=\; f(X_t)\,\mathrm{d}t \;+\; \sigma\,\mathrm{d}W_t \;+\; J\,\mathrm{d}N_t,
  \qquad
  f_1(x) = x_1\!\left(1 - x_1^2\right), \quad f_i(x) = -c\,x_i \;\; (i = 2,3),
\end{equation}
which is the gradient of the potential
$V(x) = \tfrac{1}{4}\left(x_1^2 - 1\right)^2 + \tfrac{c}{2}\sum_{i\ge 2} x_i^2$.
The drift $f$ has two stable fixed points at $(\pm 1, 0, 0)$ separated by an unstable
saddle at the origin, so the first coordinate $x_1$ constitutes the ``axis'' of the double
well while $x_2, x_3$ relax toward it with confinement strength $c$. The state is excited
continuously by isotropic Gaussian noise, with $W_t$ a three-dimensional Brownian motion of
per-component strength $\sigma$.

Superimposed on this continuous diffusion is a compound-Poisson jump term $J\,\mathrm{d}N_t$
that acts \emph{only along the double-well axis}: $N_t$ is a Poisson process of rate $\lambda$,
and at each event the first coordinate receives a symmetric Gaussian kick
$J \sim \mathcal{N}(0, \sigma_J^2)$, i.e. $x_1 \mapsto x_1 + J$, with the transverse
coordinates left unchanged. These kicks intermittently drive the particle over the potential
barrier, inducing well-to-well transitions that the pure-diffusion component would produce
only rarely, and endow the observed increments with a non-Gaussian, heavy-tailed character.
In our experiments we use $c = 1$, $\sigma = 0.5$, jump rate $\lambda = 1$, and jump scale
$\sigma_J = 1$ (comparable to the inter-well distance), integrating with the Euler--Maruyama
scheme at step $\mathrm{d}t = 10^{-3}$ over $t \in [0, 5]$ and recording snapshots every
$\Delta t = 0.05$.

We simulate $1024$ training and $256$ test trajectories from Gaussian-distributed initial
conditions and benchmark the neural-SDE architectures on their ability to recover both the
bistable drift and the jump-augmented noise structure from data alone. This setting is a
deliberately stringent test: a method that models the increments as purely Gaussian must
either underestimate the barrier-crossing frequency or inflate the diffusion coefficient,
whereas the jump events appear as anomalously large displacements localized to a single
coordinate. We report drift reconstruction error and (sliced) Wasserstein distance between
simulated and learned rollouts, averaged over independent training runs of each model.

\subsection{Coarse-graining (Molecular dynamics)}
Finally, we include a data-driven benchmark with no analytical ground-truth
drift, based on \cite{erban2014molecular}. A heavy tagged particle (mass $M$, radius $R$,
mass ratio $\mu=M/m$) collides elastically with an ideal gas of light point
particles; the bath is held in equilibrium by exact Poisson boundary influx and
collisions are resolved analytically within each step. 

It is known that as the particle density and relative mass go to infinity, the dynamics converge to the limiting Langevin process
\begin{equation}
  \mathrm{d}X_i = V_i\,\mathrm{d}t,\qquad
  \mathrm{d}V_i = -\gamma V_i\,\mathrm{d}t + \gamma\sqrt{2D}\,\mathrm{d}W_i ,\quad \text{as }  \mu\to\infty
\end{equation}
with friction $\gamma$ and diffusion $D=1$. However, there is analytical model for rarefied gases or for only moderately heavier tagged particles; in particular, dynamics in this rarefied regime are distinctly non-Brownian and involve jump-like phenomena.

We run an experiment in this rarefied non-Brownian regime with $\mu = 10$. We run 512 high-fidelity microscopic simulations of a three-dimensional particle bath with 20000 small particles in the bath and examine their effect on the tagged particle. 

The full 6D phase-space dynamics of the tagged particle is then exported so it can be fed to the models for training. Further details may be found in the repository.

\section{Implementation details}

The code for simulations and training was written in the Julia programming language. Julia is a high-level, high-performance language made for scientific computing. In particular, it has best-in-class stochastic simulation capabilities, which include accurate resolution of jumps, which are not available in mainstream ML languages such as Python. 

This has led to the baseline models (Euler-Maruyama, SDE-GAN and Trajecotry Flow Matching) being carefully ported and adapted to Julia. The ports may be found in the code repo. Any mistakes are our own.

\end{document}